\documentclass[journal]{IEEEtran}

\usepackage{stfloats}
\usepackage{amsfonts}
\usepackage{amssymb}
\usepackage{amsthm}
\usepackage{booktabs}
\usepackage{cite}
\usepackage[cmex10]{amsmath}
\usepackage{float}
\usepackage{color}
\usepackage{multirow}
\usepackage[normalem]{ulem}
\usepackage{tabularx}
\usepackage{subfigure}
\usepackage{fancybox,dashbox}
\usepackage{bbm}
\usepackage[dvipsnames]{xcolor}
\usepackage{balance}
\usepackage{algorithm}
\usepackage{algpseudocode}
\usepackage{enumitem}
\usepackage{bm}

\newtheorem{theorem}{Theorem}

\newtheorem{proposition}{Proposition}

\ifCLASSINFOpdf
\usepackage[pdftex]{graphicx}
\DeclareGraphicsExtensions{.pdf,.jpeg,.png}
\else
\usepackage[dvips]{graphicx}
\DeclareGraphicsExtensions{.eps}
\fi

\begin{document}

 
\title{TONIC: Token-Centric Semantic Communication for Task-Oriented Wireless Systems}
\author{Sige Liu,~\IEEEmembership{Member,~IEEE,} and Kezhi Wang,~\IEEEmembership{Senior Member,~IEEE}
\IEEEcompsocitemizethanks{
\IEEEcompsocthanksitem This work is supported in part by Eureka CELTIC-NEXT 5G4PHealth/Innovate UK project (10093679), UKRI under the Horizon Europe funding guarantee (EP/Y03743X/1), as part of the European Commission MSCA HarmonicAI project (101131117) and Royal Society project (IEC-NSFC-211264). K. Wang would like to acknowledge the support in part by the Royal Society Industry Fellow scheme (IF\textbackslash R2\textbackslash23200104). (Corresponding author: Kezhi Wang).
\IEEEcompsocthanksitem S. Liu and K. Wang are with the Department of Computer Science, Brunel University London, Uxbridge UB8 3PH, U.K. (e-mail:sige.liu@brunel.ac.uk; kezhi.wang@brunel.ac.uk).
}
}
\maketitle

\begin{abstract}
Tokens are becoming the basic units through which foundation models represent and process information for understanding and inference. However, traditional wireless communication, centered on bit-level fidelity, faces a mismatch between what is transmitted reliably and what downstream models actually consume. This mismatch calls for a communication design that directly accounts for token-level task relevance and downstream model requirements, rather than treating all transmitted bits as equally important.
In this paper, we propose TONIC, a token-centric semantic communication framework for task-oriented wireless systems. The transmitter converts each source sample into a sequence of tokens, estimates token-level task relevance, and allocates protection through utility-aware unequal error protection under a fixed channel-use budget. 
At the receiver, token-level confidence is used to gate unreliable decisions, turning harmful substitutions into recoverable erasures before a Transformer-based completion model restores the masked tokens for final task inference. 
Our framework combines transmitter-side semantic-aware protection with receiver-side confidence-aware gating in a modular and interpretable architecture, rather than relying solely on fully black-box end-to-end learning.
We further establish a utility-aware Bayes-risk interpretation for the receiver-side gating rule and study its interaction with unequal protection and completion. 
Experimental results on image classification show that TONIC consistently outperforms separation-based schemes, the pixel-domain DeepJSCC baseline, and token-domain baselines under matched communication budgets over AWGN, Rayleigh, and Rician channels.

\end{abstract}

\begin{IEEEkeywords}
Token communication, semantic communication, task-oriented communication, unequal error protection, generative completion, and foundation models.
\end{IEEEkeywords}

\section{Introduction}
Tokens are becoming the basic interface through which foundation models represent and process information \cite{vaswani2017attention,bai2025forgetbit}. In visual and multimodal systems, raw observations are increasingly mapped into token sequences or token grids that are directly consumed by downstream models for understanding, reasoning, and generation \cite{dosovitskiy2021vit,oord2017vqvae,esser2021taming}. For wireless systems, this means that the communicated object can no longer be viewed merely as a bitstream or a reconstructed signal. Instead, the central question is whether the semantic tokens required by the downstream model can be delivered reliably and efficiently \cite{wei2025unitocom,qiao2025todma}. This contrast between conventional bit-centric and the proposed token-centric communication is illustrated in Fig.~\ref{fig:framework_comparison}.

Traditional wireless communication remains centered on bit-level fidelity \cite{shannon1948math}. Yet once the receiver ultimately operates on tokens rather than reconstructed pixels or bitstreams, this design becomes increasingly mismatched to downstream processing \cite{gunduz2023beyond}. Different token positions may contribute very differently to the final task, so uniformly protecting all transmitted bits does not necessarily preserve the token positions that matter most to inference \cite{xin2024semanticSurvey}. Moreover, in token-based systems, an incorrect substitution can be substantially more harmful than an explicit erasure when a strong completion prior is available at the receiver \cite{chang2022maskgit,rombach2022ldm,guo2024diffusion}. These observations motivate a token-centric communication design that aligns the transmitted representation with the downstream model interface rather than optimizing bit fidelity alone.

This shift creates both opportunities and challenges. On the one hand, tokenized representations provide a structured semantic interface that is naturally compatible with modern generative and inference models \cite{esser2021taming,chang2022maskgit}. On the other hand, a practical token-centric wireless system is supposed to resolve several coupled design questions: how to quantify token-level task relevance, how to allocate unequal protection under a fixed channel-use budget, how to decide whether a decoded token should be trusted or erased, and how to exploit a completion prior without collapsing the communication pipeline into a fully black-box end-to-end training system. Resolving these questions is essential if token-centric communication is to be both effective and practically deployable.

Recent work has begun to move in this direction from several perspectives. In semantic and task-oriented communication, early studies showed that communication design should be aligned with meaning or downstream utility rather than exact symbol recovery \cite{xie2021deepsc,shao2022taskedge}. Related efforts further considered speech-oriented semantic communication \cite{weng2021speech}, multimodal task-oriented semantic communication \cite{xie2021vqa}, and explainable semantic communication \cite{ma2023taskExplainable}. In parallel, wireless image transmission has demonstrated the value of joint source channel design under bandwidth and channel uncertainty, starting from DeepJSCC \cite{bourtsoulatze2019deepjscc}, extending to bandwidth-agile \cite{kurka2021bandwidth} and constellation-constrained variants \cite{tung2022deepjsccq}, and more recently advancing through OFDM-adaptive \cite{wu2022ofdmjscc} and transformer-based architectures \cite{yang2025swinjscc}. At the same time, token-centric communication has emerged as a new direction in the era of large models, including information-bottleneck-based token communication \cite{wei2025unitocom}, token-domain multiple access \cite{qiao2025todma}, token-aware semantic-channel coding and modulation \cite{ying2026jsccmToken}, and selective or model-assisted token transmission \cite{peng2025selective,solat2025fedhlm}. 
Nevertheless, existing approaches still leave three important gaps: they do not explicitly integrate token-level task relevance into transmitter-side protection under a fixed communication budget; they do not clearly distinguish between accepting an unreliable token and erasing it at the receiver; and they do not provide a unified receiver rule that explicitly couples token utility, decoding confidence, and completion-assisted recovery into a coherent end-to-end framework.

\begin{figure*}[t]
    \centering
    \begin{minipage}[t]{\textwidth}
        \centering
        \includegraphics[width=0.8\textwidth]{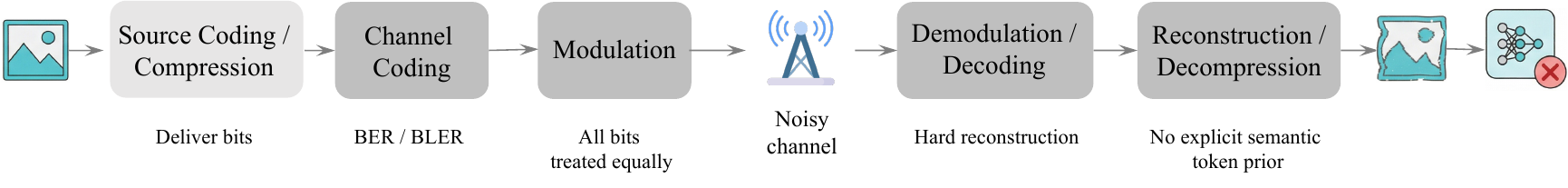}
        \vspace{0mm}
        
        {\footnotesize (a) Conventional bit-centric communication}
    \end{minipage}

    \vspace{2mm}

    \begin{minipage}[t]{\textwidth}
        \centering
        \includegraphics[width=0.8\textwidth]{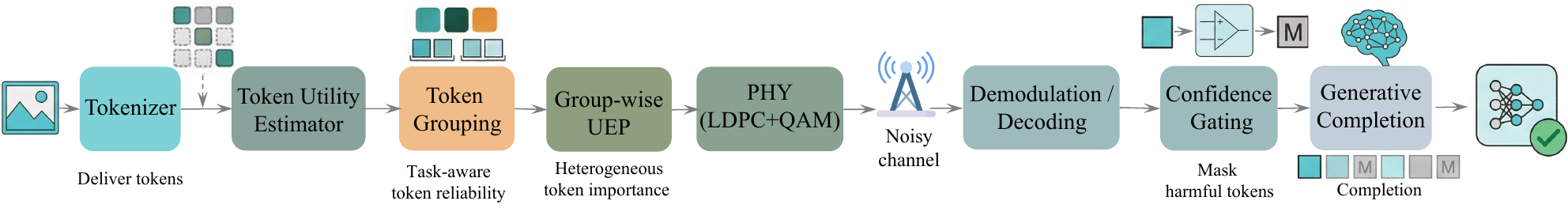}
        \vspace{0mm}
        
        {\footnotesize (b) Proposed token-centric TONIC framework}
    \end{minipage}
    \caption{Conventional bit-centric communication versus the proposed token-centric TONIC framework. }
    \label{fig:framework_comparison}
\end{figure*}

To address these gaps, we propose TONIC, a token-centric semantic communication framework for task-oriented wireless systems. In the image-transmission setting considered in this paper, the transmitter first converts each input image into a sequence of semantic tokens, estimates token-level task relevance, and allocates protection through utility-aware unequal error protection under a fixed channel-use budget. At the receiver, token-level confidence extracted from soft decoding is used to gate unreliable decisions, turning harmful substitutions into recoverable erasures before a Transformer-based completion model restores the masked tokens for final task inference. In this way, TONIC jointly determines which semantic tokens should receive stronger protection and how residual uncertainty should be handled at the receiver, while preserving a modular and interpretable architecture.
The main contributions of this paper are summarized as follows:
\begin{itemize}
    \item We develop a token-centric semantic communication framework for task-oriented wireless systems, in which the communicated object is a discrete sequence of visual tokens directly consumed by the downstream model rather than a reconstructed bitstream or pixel-domain signal.
    \item We propose a transmitter-side semantic-aware protection mechanism that combines token-utility estimation, utility grouping, and budget-constrained unequal error protection, enabling communication resources to be concentrated on task-critical token positions.
    \item We develop a receiver-side confidence-gating strategy that converts harmful low-confidence substitutions into recoverable erasures, and we establish a utility-aware Bayes-risk interpretation that links token utility, decoding confidence, and completion-assisted recovery.
    \item We integrate generative token completion into the communication loop and demonstrate through experiments that TONIC consistently outperforms separation-based transmission, pixel-domain DeepJSCC, and token-domain baselines under matched communication budgets across AWGN, Rayleigh, and Rician channels.
\end{itemize}

The remainder of this paper is organized as follows. Section~II reviews the most relevant work on semantic and task-oriented communication, wireless image transmission, token-centric communication, and generative recovery. Section~III introduces the system model and the high-level problem formulation. Section~IV presents the TONIC framework. Section~V develops the design and analysis of TONIC, including utility estimation and grouping, budget-constrained protection, confidence-aware gating, and offline threshold calibration. Section~VI presents the experimental setup and numerical results. Section~VII concludes the paper.

\section{Related Work}
This section reviews four research lines most relevant to the present work: semantic and task-oriented communication, learned wireless image transmission, token-centric communication, and generative recovery for semantic communication. Together, these lines provide the immediate context for understanding why token-aware protection, receiver-side acceptance or erasure decisions, and completion-assisted recovery should be studied in a unified framework.

\subsection{Semantic and Task-Oriented Communication}

Semantic and task-oriented communication move beyond conventional symbol-recovery objectives by aligning communication system design with meaning or downstream utility. Early semantic communication systems such as DeepSC demonstrated this principle for text transmission by optimizing sentence-level meaning recovery \cite{xie2021deepsc}. Similar ideas were later extended to speech-oriented semantic communication \cite{weng2021speech} and multimodal task-oriented communication for visual question answering \cite{xie2021vqa}. A more general task-oriented formulation for edge inference was developed in \cite{shao2022taskedge}, where communication was explicitly tied to the downstream inference objective. Explainability was further introduced into this line of work in \cite{ma2023taskExplainable}. 
These studies establish the importance of task-aware communication, but they do not directly resolve the token-centric setting considered here. In particular, they do not explicitly treat discrete semantic tokens as the communication object, nor do they address how token-level task relevance should be translated into unequal protection and receiver-side token acceptance or erasure decisions under a fixed symbol budget.

\subsection{Wireless Image Transmission}

In parallel, learned wireless image transmission has progressed rapidly through deep joint source channel coding. DeepJSCC first showed that end-to-end image transmission can outperform separation-based schemes in bandwidth-limited and noisy wireless settings \cite{bourtsoulatze2019deepjscc}. Bandwidth-agile JSCC later demonstrated adaptation to varying channel resources \cite{kurka2021bandwidth}. Digital or practical-constraint variants such as DeepJSCC-Q incorporated constellation constraints into learned JSCC \cite{tung2022deepjsccq}, while OFDM-adaptive designs introduced channel-adaptive transmission over multipath fading \cite{wu2022ofdmjscc}. More recently, transformer-based architectures such as SwinJSCC improved representation power and channel adaptation \cite{yang2025swinjscc}. Other recent digital or cooperative deep JSCC systems, such as D$^2$-JSCC and Process-and-Forward, further reflect the trend toward more practical and structured learned communication pipelines \cite{huang2025d2jscc,bian2025processforward}.
Despite their empirical performance, these approaches remain largely pixel- or feature-centric. The communicated object is typically an image, a continuous latent tensor, or a semantic feature representation, and the design objective is usually reconstruction fidelity, perceptual quality, or continuous feature preservation. By contrast, TONIC directly communicates semantic tokens and explicitly controls token-level protection under a fixed communication budget. Accordingly, the present work is not another image reconstruction architecture, but a token-centric communication framework instantiated and evaluated in an image classification setting.

\subsection{Token-Centric Communication}

The rise of foundation models has motivated a shift from bit-centric semantics toward token-centric communication. A semantic-information viewpoint centered on tokens was explicitly advocated in \cite{bai2025forgetbit}. UniToCom investigated token communication from an information-bottleneck perspective \cite{wei2025unitocom}, while ToDMA extended token-centric design to a multiple-access setting \cite{qiao2025todma}. Token-aware semantic-channel coding and modulation were studied in \cite{ying2026jsccmToken}, showing that token representations can be integrated into practical digital communication pipelines.
Other recent studies have explored related token-level mechanisms from different angles. Attention-guided semantic transmission was considered in \cite{lee2026attention}. Selective-token multimodal semantic communication was studied in \cite{peng2025selective}. Hybrid language-model-based token delivery was investigated in \cite{solat2025fedhlm}. 
These works strongly support the importance of token-aware communication, but they still leave open how transmitter-side protection, receiver-side token acceptance or erasure, and completion-assisted recovery should be jointly designed within a general framework.

\subsection{Generative Recovery and Completion-Assisted Communication}

Generative priors provide a natural mechanism for recovering incomplete semantic representations. In visual generative modeling, VQ-VAE introduced learned discrete latent tokens \cite{oord2017vqvae}, Taming Transformers demonstrated high-resolution token-based image generation \cite{esser2021taming}, and MaskGIT showed that missing visual tokens can be effectively restored from bidirectional context \cite{chang2022maskgit}. Latent diffusion models further reinforced the practical value of generative priors over structured latent spaces \cite{rombach2022ldm}. These developments suggest that erased token positions may be substantially easier to recover than wrong token substitutions when a strong completion prior is available.
This insight is increasingly relevant to semantic communication. Diffusion-assisted or generation-assisted semantic recovery has already been explored for semantically meaningful restoration under constrained wireless resources \cite{guo2024diffusion}. Language-oriented semantic communication with fine-tuned diffusion models was studied in \cite{wei2024language}. Generative semantic communication for joint image transmission and segmentation was developed in \cite{yuan2025generative}. More recently, foundation-model-based generative semantic communication has been used to analyze perception errors and semantic-aware power allocation \cite{xu2025genscfm}. 
However, these works do not explicitly provide a unified token-centric design that jointly addresses: 1) token-level task-aware protection at the transmitter, 2) a principled receiver rule for deciding whether a decoded token should be accepted or erased, and 3) completion-assisted recovery before downstream task inference.
TONIC differs from existing work in three aspects. First, it incorporates token-level task relevance directly into transmitter-side protection. Second, it introduces a receiver-side confidence-gating rule that explicitly distinguishes between accepting a decoded token and erasing it. Third, it combines this erasure-shaping mechanism with generative token completion and downstream task inference within a general modular framework.

\begin{figure*}[t]
    \centering
    \includegraphics[width=0.75\textwidth]{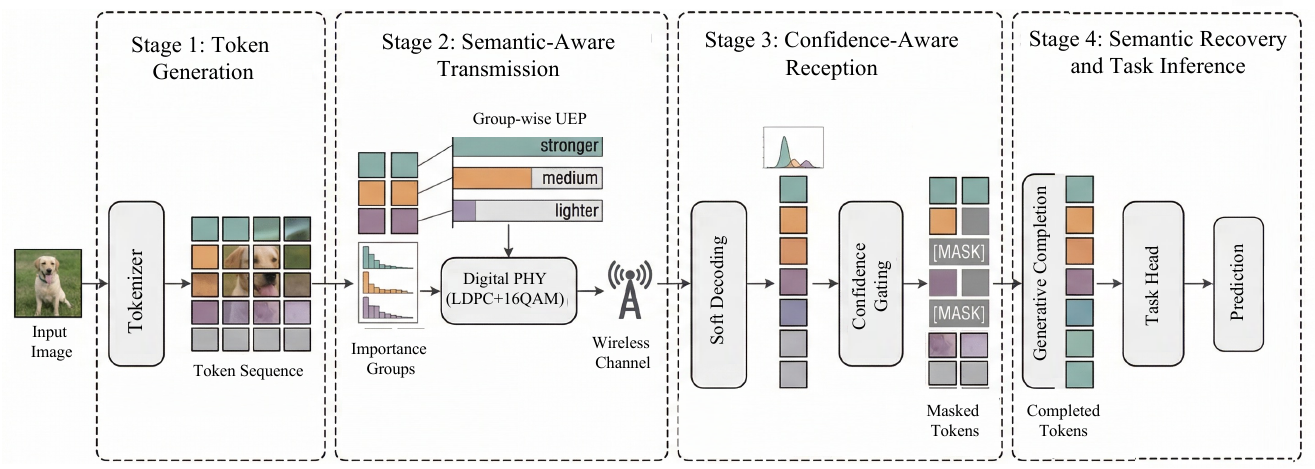}
    \caption{Online runtime workflow of TONIC. }
    \label{fig:tonic_runtime_workflow}
\end{figure*}

\section{System Model and Problem Formulation}
\label{sec:system_model}
As illustrated conceptually in Fig.~\ref{fig:framework_comparison} and at the system level in Fig.~\ref{fig:tonic_runtime_workflow}, we consider a token-centric task-oriented uplink communication system. A user device observes an image sample, converts it into a token sequence, and transmits it over a wireless channel under a fixed symbol budget. The receiver extracts token-level soft information from the received signal, applies confidence-aware token decisions with optional erasures, restores erased positions using a completion prior, and finally performs downstream task inference at the server.

Throughout the paper, bold lowercase letters denote vectors or stacked signal/token representations, bold uppercase letters denote matrices, and calligraphic letters denote sets. Scalars are written in standard italic form.

\subsection{Image-to-Token Representation}
\label{subsec:dataflow_tokens}

Let $\mathbf{x} \in \mathbb{R}^{H \times W \times C}$ denote the input image. A tokenizer $T(\cdot)$ maps $\mathbf{x}$ to a token sequence $\mathbf{t} = [t_1,\ldots,t_L]^{\mathsf T}$, where $t_i \in \mathcal{K} \triangleq \{1,2,\ldots,K\}$. Here, $L$ is the token sequence length and $K$ is the tokenizer codebook size. 

Let $\mathbf{E} \in \mathbb{R}^{K \times D}$ denote the token embedding table, where $D$ is the embedding dimension. The embedding of token position $i$ is $\mathbf{e}_i = \mathbf{E}[t_i,:] \in \mathbb{R}^{D}$, and the stacked embedding sequence is $\mathbf{Z} = [\mathbf{e}_1,\ldots,\mathbf{e}_L]^{\mathsf T} \in \mathbb{R}^{L \times D}$. In TONIC, the communicated object is the discrete token sequence $\mathbf{t}$, while the embedding sequence $\mathbf{Z}$ serves as the representation on which token-utility estimation, completion, and downstream task inference operate.

\subsection{Token-to-Waveform Mapping Under a Fixed Budget}
\label{subsec:token_waveform}

We allocate a fixed block of $N$ complex channel uses to each source sample. The transmitted baseband block is denoted by $\mathbf{s} = [s_1,\ldots,s_N]^{\mathsf T} \in \mathbb{C}^{N}$ and is subject to the average-power constraint
\begin{equation}
\frac{1}{N}\|\mathbf{s}\|_2^2 \le P.
\label{eq:power_constraint_v4}
\end{equation}
Here, $N$ denotes the per-sample communication budget, while $P$ denotes the average transmit-power constraint. Together, they define a unified transmission constraint for comparing separation-based transmission, pixel-domain DeepJSCC \cite{bourtsoulatze2019deepjscc}, and token-domain schemes on equal per-sample resources.
The end-to-end transmitter processing is abstracted as
\begin{equation}
\mathbf{s} = f_{\mathrm{tx}}(\mathbf{t};\boldsymbol{\pi}),
\label{eq:ftx_v4}
\end{equation}
where $f_{\mathrm{tx}}(\cdot)$ includes token-to-bit mapping, channel coding, digital modulation, and optional unequal protection, while $\boldsymbol{\pi}$ denotes the transmitter-side protection parameters to be specified later. At this stage, \eqref{eq:ftx_v4} only defines the communication interface between the token domain and the physical layer; the detailed design of the transmitter-side protection policy is developed in Section~V.

\subsection{Channel Model}
\label{subsec:channel_model}

We consider a flat block-fading complex baseband channel model,
\begin{equation}
\mathbf{r} = h\mathbf{s} + \mathbf{w},
\label{eq:channel_v4}
\end{equation}
where $\mathbf{s} \in \mathbb{C}^{N}$ and $\mathbf{r} \in \mathbb{C}^{N}$ denote the transmitted and received symbol blocks, respectively, $h \in \mathbb{C}$ is the channel coefficient assumed constant over the $N$-symbol block, and $\mathbf{w} \sim \mathcal{CN}(\mathbf{0},\sigma^2\mathbf{I})$ is circularly symmetric complex Gaussian noise. This model covers the channel instantiations considered later in the experiments in Section~VI, including AWGN as the special case $h \equiv 1$, Rayleigh block fading with $h \sim \mathcal{CN}(0,1)$, and normalized Rician block fading with a fixed $K$-factor. The receiver is assumed to have the channel-state information required for coherent demodulation and decoding. Unless otherwise stated, all channel models are normalized such that $\mathbb{E}[|h|^2]=1$, and the nominal average SNR is therefore $P/\sigma^2$.

\subsection{Receiver Soft Output and Token-Level Confidence}
\label{subsec:posterior_confidence}

From the received block $\mathbf{r}$, the receiver performs coherent demodulation and soft decoding to obtain token-level soft information. For token position $i$, let

\begin{equation}
p_i(k) \triangleq \Pr(\Theta_i = k \mid \mathbf{r},h), \qquad k \in \mathcal{K},
\label{eq:posterior_v4}
\end{equation}
denote the posterior distribution over the token alphabet, where $\Theta_i$ is the random source token at position $i$. In practice, $p_i(\cdot)$ is obtained by converting bit-level soft information, such as log-likelihood ratios, into posterior probabilities over the discrete token hypotheses under the fixed token-to-bit mapping.
Based on $p_i(\cdot)$, the receiver forms a hard token estimate $
\hat{t}_i = \arg\max_{k \in \mathcal{K}} p_i(k)
$ and the associated confidence score
$
c_i = \max_{k \in \mathcal{K}} p_i(k)
$.
Collecting these quantities over all positions yields the hard-decoded token sequence $\hat{\mathbf{t}} = [\hat{t}_1,\ldots,\hat{t}_L]^{\mathsf T}$ and the confidence sequence $\mathbf{c} = [c_1,\ldots,c_L]^{\mathsf T}$, which form the interface used later for receiver-side confidence-aware gating.

\begin{figure*}[t!]
    \centering
    \includegraphics[width=0.8\textwidth]{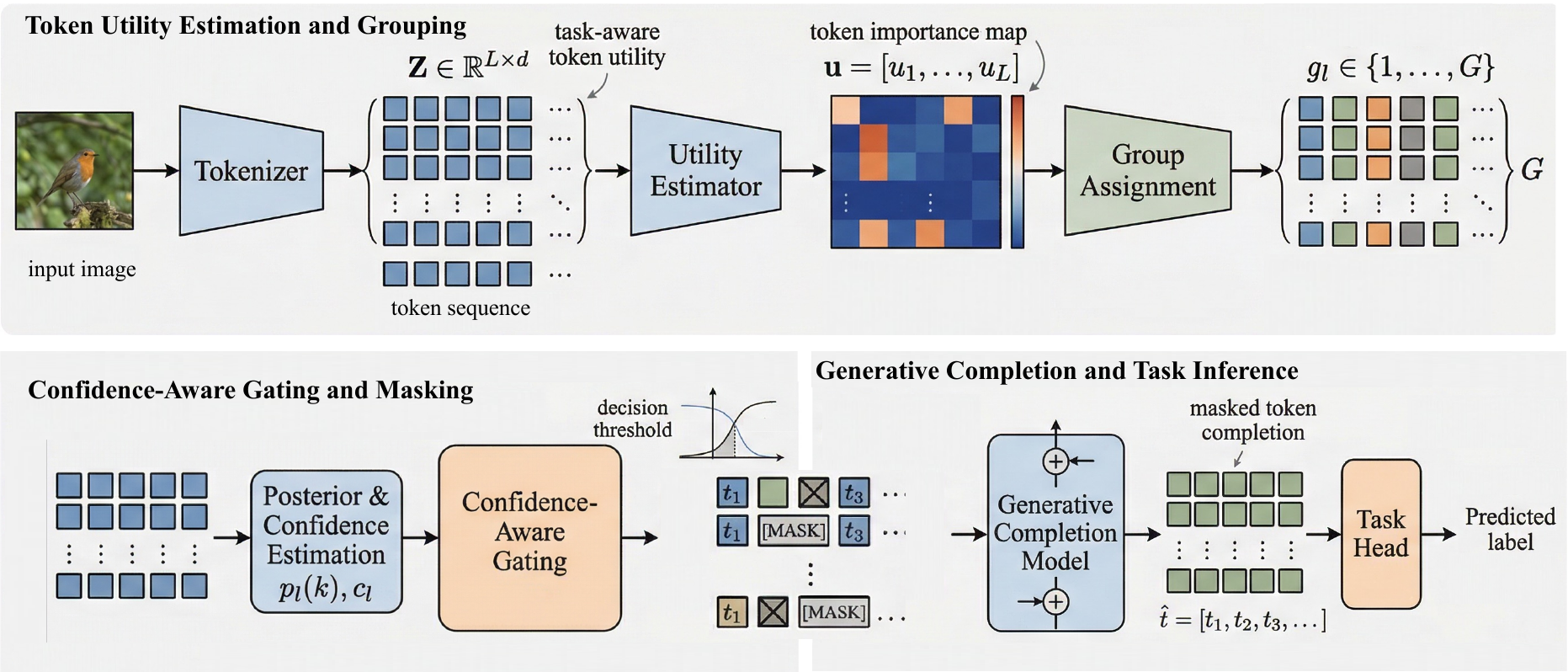}
    \caption{Mechanism decomposition of TONIC: utility-aware token grouping, confidence-aware gating and erasure shaping, and generative completion for task-oriented inference.}
    \label{fig:tonic_mechanisms}
\end{figure*}
\subsection{Erasure Interface, Completion, and Task Inference}
\label{subsec:erasure_completion}
To enable completion-assisted recovery, the receiver is allowed to output an abstract erasure symbol $\perp$ for unreliable positions. Let $\mathcal{K}_{\perp} \triangleq \mathcal{K} \cup \{\perp\}$ and define the post-gating token sequence as $\tilde{\mathbf{t}} = [\tilde{t}_1,\ldots,\tilde{t}_L]^{\mathsf T} \in \mathcal{K}_{\perp}^{L}$. Here, $\hat{\mathbf{t}}$ denotes the hard-decoded token sequence before gating, $\tilde{\mathbf{t}}$ denotes the gated sequence that may contain erasures, and $\bar{\mathbf{t}}$ denotes the final completed token sequence. At the system-model level, $\perp$ is an abstract erasure symbol external to the source token alphabet; in implementation, it is mapped to a dedicated mask token or mask embedding recognized by the completion model.
A completion prior then restores the erased positions according to
\begin{equation}
\bar{\mathbf{t}} = f_{\mathrm{comp}}(\tilde{\mathbf{t}}),
\label{eq:completion_v4}
\end{equation}
where $\bar{\mathbf{t}} = [\bar{t}_1,\ldots,\bar{t}_L]^{\mathsf T} \in \mathcal{K}^{L}$.

The server finally performs downstream inference using the completed embedding sequence $\bar{\mathbf{Z}} \in \mathbb{R}^{L \times D}$ induced by $\bar{\mathbf{t}}$ through the same embedding table $\mathbf{E}$, yielding
\begin{equation}
\hat{y} = f_{\mathrm{task}}(\bar{\mathbf{Z}}).
\label{eq:task_inference_v4}
\end{equation}
Thus, the end-to-end TONIC pipeline involves four token states: the source sequence $\mathbf{t}$, the hard-decoded sequence $\hat{\mathbf{t}}$, the gated sequence $\tilde{\mathbf{t}}$, and the final completed sequence $\bar{\mathbf{t}}$.

\subsection{Problem Formulation}
\label{subsec:problem_formulation}

Let $y$ denote the task-dependent ground truth and let $\hat{y}$ denote the final prediction at the server. We measure performance through a task loss $L_{\mathrm{task}}(\hat{y},y)$, such as cross-entropy for classification. Given a fixed tokenizer, a completion prior, and a downstream task head, our goal is to design the transmitter-side protection policy and the receiver-side decision rule so as to minimize the expected downstream task loss under a fixed communication budget.

Accordingly, the high-level design objective of TONIC is
\begin{equation}
(\boldsymbol{\pi}^{\star}, \Gamma^{\star})
=
\arg\min_{\boldsymbol{\pi},\,\Gamma}
\ \mathbb{E}_{(\mathbf{x},y),\,h,\,\mathbf{w}}
\!\left[
L_{\mathrm{task}}\!\left(
\hat{y}(\mathbf{x},h,\mathbf{w};\boldsymbol{\pi},\Gamma),\,y
\right)
\right],
\label{eq:problem_v4}
\end{equation}
where $\hat{y}(\mathbf{x},h,\mathbf{w};\boldsymbol{\pi},\Gamma)$ denotes the final task prediction induced by the end-to-end TONIC pipeline under the transmitter-side protection policy $\boldsymbol{\pi}$ and the receiver-side decision rule $\Gamma(\cdot)$. For notational simplicity, the dependence of $\hat{y}$ on the fixed tokenizer, completion model, and task head is suppressed. The objective in \eqref{eq:problem_v4} is defined under the fixed per-sample transmission budget $N$ and average-power constraint $P$ introduced in Section~III-B.

Problem \eqref{eq:problem_v4} is a system-level design objective rather than the training objective of a single end-to-end neural network. In particular, the tokenizer, completion model, and task head are trained offline, while the communication-specific design is realized through utility estimation, token grouping, budget-constrained protection, confidence-aware gating, and offline threshold calibration. The detailed framework and design are developed in Sections~IV and V. Directly solving \eqref{eq:problem_v4} as a unified optimization problem is challenging because the token representation and receiver decisions are discrete, the communication interfaces are non-differentiable, and transmitter-side protection, receiver-side uncertainty handling, and completion quality are tightly coupled under the fixed symbol budget. These difficulties motivate the modular TONIC design developed in the following sections.

\section{The TONIC Framework}
\label{sec:tonic_framework}
To address the system-level design objective, TONIC adopts a modular architecture with explicit interfaces between communication, confidence-aware token decisions, and downstream inference. The framework is organized around three coupled design components: transmitter-side protection, receiver-side confidence gating, and completion-assisted recovery before task inference. This section explains how these components are instantiated and how they interact within the overall TONIC pipeline, while the detailed utility definitions, protection design, and gating analysis are developed in Section~V.

\subsection{Framework Overview: Online Runtime and Offline Support}
\label{subsec:framework_overview}

TONIC is organized around two coupled workflows: an online runtime path for per-sample transmission and inference, and an offline preparation path that produces the artifacts required by the runtime system.

The online runtime workflow is illustrated in Fig.~\ref{fig:tonic_runtime_workflow}. Given an input image, the user equipment first tokenizes it into a token sequence. The sequence is then transmitted under a fixed communication budget using a group-wise protection profile. At the receiver, soft decoding produces token-level posterior information, which is converted into confidence-aware token decisions. Unreliable decisions are mapped to erasures, and the resulting masked token sequence is forwarded to a server-side completion model. The completed token sequence is finally passed to the downstream task head.

The offline preparation workflow is illustrated in Fig.~\ref{fig:tonic_offline_pipeline}. Before deployment, TONIC prepares several reusable artifacts, including a shared token-utility map, a utility-based grouping rule, calibrated group-wise protection profiles, and group-wise confidence thresholds. The completion model and task head are also trained offline and then frozen during communication experiments. This separation between offline preparation and online deployment keeps the runtime path lightweight while retaining semantic awareness and task alignment.

The key design principle is that communication uncertainty should be handled at two complementary levels. First, the transmitter should use the limited symbol budget to preferentially protect task-relevant token positions. Second, the receiver should avoid blindly trusting all hard token decisions; instead, it should convert sufficiently unreliable decisions into erasures whenever erasure is more compatible with completion-assisted recovery than direct acceptance

\begin{figure}[t!]
    \centering
    \includegraphics[width=\columnwidth]{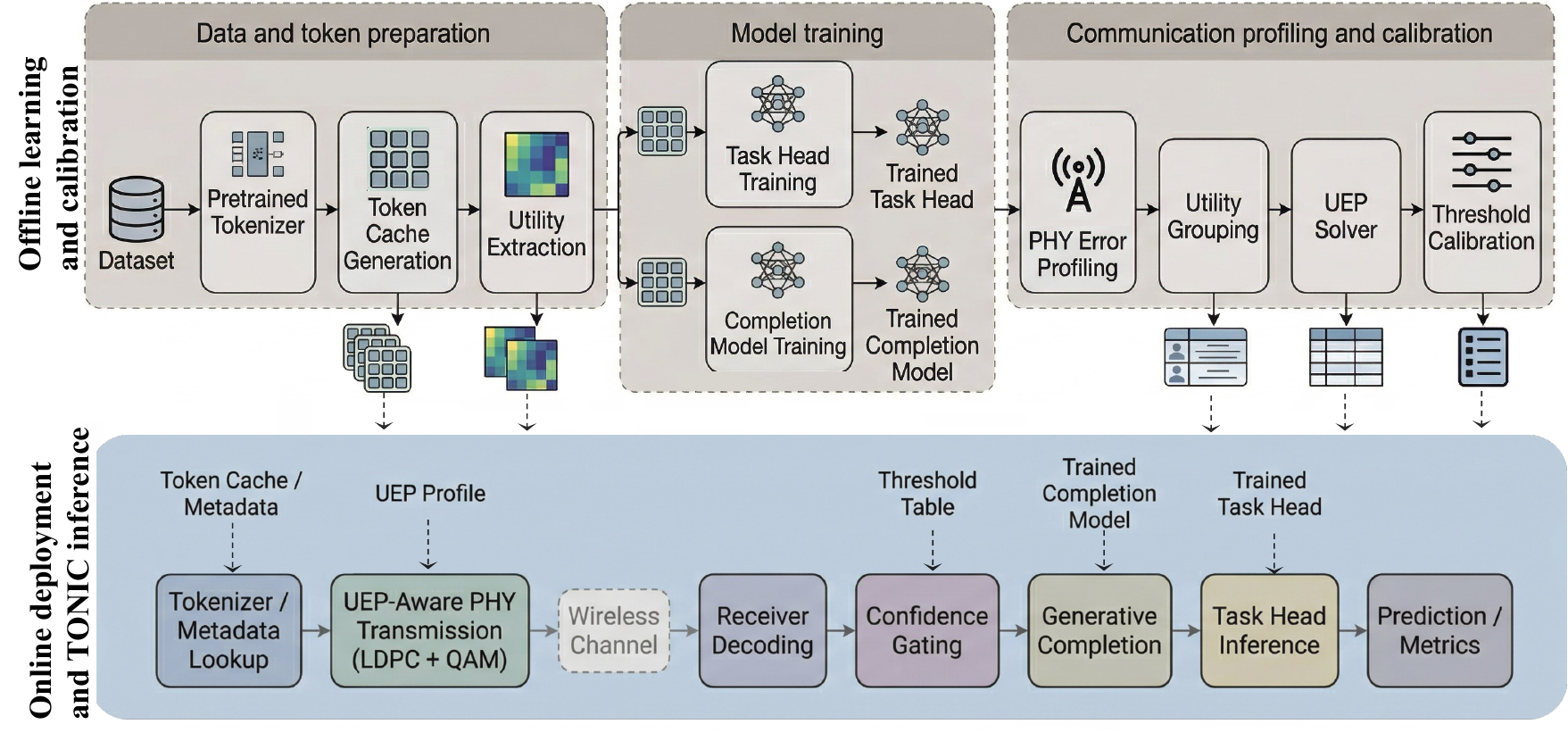}
    \caption{Offline learning and calibration pipeline.}
    \label{fig:tonic_offline_pipeline}
\end{figure}

\subsection{Transmitter-Side Semantic-Aware Protection}
\label{subsec:tx_framework}

The transmitter-side role of TONIC is to determine which token positions should receive stronger protection under the fixed communication budget. To this end, TONIC assigns each token position a utility score that reflects its relevance to the downstream task. Because per-position adaptation would be brittle and would increase control complexity, token positions are quantized into a small number of utility groups, as illustrated by the transmitter-side mechanism decomposition in Fig.~\ref{fig:tonic_mechanisms}. High-utility groups correspond to positions that are more critical to downstream inference, whereas low-utility groups contain positions that can tolerate weaker protection.

Once the token positions are grouped, TONIC selects a protection level for each group from a finite digital-PHY policy set. The key idea is not to optimize an unconstrained waveform encoder, but to choose a group-wise protection profile that remains compatible with standard digital communication modules. This policy set is realized through fixed 16QAM modulation together with multiple LDPC code-rate options. This preserves implementation compatibility while reflecting the central semantic insight that not all token positions should be treated equally.

At runtime, the transmitter applies the grouping rule and the calibrated group-wise protection profile. This avoids per-sample iterative optimization and keeps the runtime complexity at the user side modest. The detailed utility definitions, grouping procedure, and budget-constrained protection design are developed in Section~V.

\subsection{Receiver-Side Confidence Gating and Completion-Assisted Recovery}
\label{subsec:rx_framework}

The receiver-side role of TONIC is not merely to decode a hard token sequence, but to decide when a decoded token should be trusted and when it should instead be declared erased. This distinction is crucial in token-centric inference, because an incorrect token can be more harmful than an explicit erasure when a strong completion prior is available.

Starting from the token posteriors and hard decisions defined in Section~III, TONIC applies a confidence-aware gating rule to each token position. Tokens with sufficiently high confidence are accepted, while low-confidence decisions are converted into erasures. The thresholds are applied at the utility-group level, which yields a compact and robust receiver policy. In this way, the receiver explicitly transforms part of the residual uncertainty into a form that is more compatible with completion-assisted recovery.

The masked token sequence is then processed by a server-side generative completion model, which predicts plausible replacements for the erased tokens using bidirectional token context. The completed token sequence is mapped back to embeddings and passed to the task head for final inference. As illustrated by the receiver-side mechanism decomposition in Fig.~\ref{fig:tonic_mechanisms}, this design creates a direct interface between communication uncertainty and generative recovery. The transmitter reduces the probability of errors in important token positions, while the receiver converts part of the remaining uncertainty into a representation that can be more effectively repaired before task inference.

\subsection{Mechanism Decomposition and Role of Each Module}
\label{subsec:mechanism_roles}

Fig.~\ref{fig:tonic_mechanisms} summarizes the internal mechanism decomposition of TONIC. At a high level, the framework consists of three tightly coupled modules: utility-aware token grouping, budget-constrained unequal protection, and confidence-aware gating with completion-assisted recovery.

\begin{itemize}
    \item Utility-aware token grouping, which converts fine-grained token importance into a group structure that can be shared by both the transmitter and the receiver. Its role is to expose which token positions are more critical to downstream inference, while avoiding the fragility and control overhead of fully position-specific communication policies.
    \item Budget-constrained unequal protection, which allocates stronger protection to more important groups under the fixed symbol budget. Its role is to shape the error pattern before decoding by reducing the probability of harmful corruption on task-oriented token positions.
    \item Confidence-aware gating with completion, which operates after soft decoding. Its role is to prevent highly unreliable substitutions from being passed directly to the downstream model. Instead, sufficiently uncertain positions are converted into erasures and subsequently restored by the completion model before task inference.
\end{itemize}
These modules are complementary rather than redundant. Utility-aware grouping provides the semantic structure needed for resource allocation. Unequal protection reduces the frequency of harmful token errors before they reach the receiver. Confidence-aware gating and completion then handle part of the residual uncertainty in a way that is better aligned with downstream inference. The performance gain of TONIC, therefore, comes not from a single mechanism in isolation, but from the coordinated interaction of the three modules.

The framework above specifies how TONIC operates as a modular token-centric communication system, but it does not yet explain how token utility is quantified, how the group-wise protection profile is selected under a fixed communication budget, or how receiver-side confidence thresholds are set in a principled manner. These questions are addressed in Section~V, which develops the core design and analysis of TONIC in detail.


\section{Core Design and Analysis of TONIC}
\label{sec:tonic_design_analysis}
Section~IV describes TONIC at the framework level, including its online runtime workflow, offline preparation pipeline, and transmitter-receiver role decomposition. We now develop the core mechanisms that instantiate this framework, focusing on four questions: how to quantify token-level task relevance, how to map fine-grained token importance into a compact grouping interface, how to allocate protection under a fixed communication budget, and how to perform receiver-side confidence-aware gating in a principled manner. We also present the offline threshold calibration procedure used to obtain a deployable receiver policy.

\subsection{Token Utility Estimation and Grouping}
\label{subsec:tonic_utility_grouping}
A central design principle of TONIC is that token positions do not contribute equally to the downstream task. We therefore associate each token position $i$ with a utility score, where a larger value indicates that corruption at that position is expected to induce a larger degradation in downstream task performance. Since TONIC is designed for practical deployment, it is important to distinguish between sample-wise utility measures, which define how token importance is assessed for an individual sample, and the shared position-wise utility profile actually used for grouping and protection design during deployment.

\subsubsection{Sample-wise gradient-based utility}
\label{subsubsec:tonic_grad_utility}

For a given sample $(\mathbf{x},y)$, let $\mathbf{t}=T(\mathbf{x})$ denote the source token sequence and let $\mathbf{e}_i=\mathbf{E}[t_i,:]$ denote the embedding vector at token position $i$. A practical utility proxy is the sensitivity of the task loss to the corresponding token embedding:
\begin{equation}
w_i^{\mathrm{grad}}(\mathbf{x},y)
\triangleq
\left\|
\frac{\partial L_{\mathrm{task}}}{\partial \mathbf{e}_i}
\right\|_2 .
\label{eq:tonic_grad_utility}
\end{equation}
This quantity measures how strongly the task loss changes under a local perturbation of the embedding at position $i$. It can be computed efficiently through standard back-propagation and therefore serves as the practical utility signal used by TONIC.

\subsubsection{Sample-wise masking-based utility}
\label{subsubsec:tonic_mask_utility}

We also consider a stronger but more expensive intervention-based utility for offline reference. Let $\mathbf{Z}\in\mathbb{R}^{L\times D}$ denote the clean embedding sequence induced by $\mathbf{t}$, and let $\mathbf{Z}^{(i\leftarrow \perp)}$ denote the sequence obtained by replacing position $i$ with a learned mask embedding. The masking-based utility is defined as
\begin{equation}
w_i^{\mathrm{mask}}(\mathbf{x},y)
\triangleq
L_{\mathrm{task}}\!\left(f_{\mathrm{task}}\!\left(\mathbf{Z}^{(i\leftarrow \perp)}\right),y\right)
-
L_{\mathrm{task}}\!\left(f_{\mathrm{task}}(\mathbf{Z}),y\right).
\label{eq:tonic_mask_utility}
\end{equation}
This score directly quantifies the increase in task loss caused by removing the information at token position $i$. In TONIC, the masking-based utility is used only as an oracle-aided offline reference and ablation target, rather than as the default deployable utility measure.

\subsubsection{Shared utility profile for deployment}
\label{subsubsec:tonic_shared_utility}

TONIC does not recompute utility scores or regroup token positions on a per-sample basis during deployment. Instead, it uses a shared position-wise utility profile estimated offline from a calibration set $\mathcal{D}_{\mathrm{val}}$. Specifically, the deployment-time gradient-based utility profile is defined as

\begin{equation}
\bar{w}_i^{\mathrm{grad}}
\triangleq
\frac{1}{|\mathcal{D}_{\mathrm{val}}|}
\sum_{(\mathbf{x},y)\in\mathcal{D}_{\mathrm{val}}}
w_i^{\mathrm{grad}}(\mathbf{x},y),
\label{eq:tonic_grad_utility_avg}
\end{equation}
and, for the oracle-aided reference,
\begin{equation}
\bar{w}_i^{\mathrm{mask}}
\triangleq
\frac{1}{|\mathcal{D}_{\mathrm{val}}|}
\sum_{(\mathbf{x},y)\in\mathcal{D}_{\mathrm{val}}}
w_i^{\mathrm{mask}}(\mathbf{x},y).
\label{eq:tonic_mask_utility_avg}
\end{equation}
These shared position-wise profiles are the quantities actually used for grouping and protection design in deployment. In the sequel, the notation $\bar{w}_i$ refers generically to the deployment-time utility profile, instantiated either by $\bar{w}_i^{\mathrm{grad}}$ for the practical design or by $\bar{w}_i^{\mathrm{mask}}$ for the oracle-aided reference.

\subsubsection{Utility quantization and grouping}
\label{subsubsec:tonic_grouping}

Assigning an independent protection parameter to every token position would significantly increase control complexity. TONIC therefore quantizes token positions into a small number of utility groups. Specifically, let $g(i)\in\{1,\ldots,G\}$ denote the group index of token position $i$, where $G$ is the number of utility groups.
Let $L_g$ denote the number of token positions in group $g$, and let $W_g$ denote the aggregate utility mass of that group:
\begin{equation}
L_g \triangleq \left|\left\{i:g(i)=g\right\}\right|,
\qquad
W_g \triangleq \sum_{i:g(i)=g}\bar{w}_i .
\label{eq:tonic_group_stats}
\end{equation}
This grouping step reduces fine-grained token-importance heterogeneity to a compact shared interface that can be consistently used by both the transmitter and the receiver. At the transmitter, the grouping map determines how communication resources are allocated across token subsets. At the receiver, the same grouping structure supports robust group-wise confidence thresholds.

\subsection{Budget-Constrained Utility-Aware Protection}
\label{subsec:tonic_uep}

Once the utility groups are fixed, the transmitter only needs to assign one protection level to each group. TONIC performs this design over a finite digital-PHY policy set $\mathcal{P}=\{\pi^{(1)},\ldots,\pi^{(|\mathcal{P}|)}\}$. In the present implementation, each policy corresponds to a practical operating point under fixed 16QAM modulation and a discrete choice of LDPC code rate. Let $c(\pi)$ denote the symbol cost per token under policy $\pi$, and let $\widehat{\varepsilon}_g(\pi)$ denote the offline-profiled post-decoding token error rate of group $g$ under policy $\pi$, measured before receiver-side gating and completion. 
In practice, these error curves are profiled offline under the target deployment condition, including the channel model and operating point used for protection design.
The dependence on $g$ reflects the fact that different token groups may exhibit different effective reliability statistics under the same PHY operating point.
The full design objective in \eqref{eq:problem_v4} jointly couples protection, receiver-side gating, completion, and downstream inference, and is not directly tractable. TONIC therefore adopts a practical utility-weighted surrogate for transmitter-side protection design:
\begin{equation}
\begin{aligned}
\min_{\{\pi_g\}_{g=1}^{G}}
\quad &
\sum_{g=1}^{G} W_g \, \widehat{\varepsilon}_g(\pi_g) \\
\mathrm{s.t.}\quad &
\sum_{g=1}^{G} L_g \, c(\pi_g) \le N, \qquad
\pi_g \in \mathcal{P},\ \forall g ,
\end{aligned}
\label{eq:tonic_surrogate_knapsack}
\end{equation}
where $W_g$ is the aggregate utility mass of group $g$ and $L_g$ is the number of token positions in that group. This design criterion prioritizes reliability improvements in groups with larger downstream importance while respecting the fixed symbol budget $N$.

\begin{algorithm}[t]
\caption{Utility-Weighted Group-Wise UEP Scheduler}
\label{alg:tonic_uep_scheduler}
\begin{algorithmic}[1]
\Require Group utility masses $\{W_g\}_{g=1}^{G}$, group sizes $\{L_g\}_{g=1}^{G}$, total budget $N$, policy set $\mathcal{P}$ with costs $c(\pi)$, and profiled error curves $\widehat{\varepsilon}_g(\pi)$
\Ensure Group-wise protection profile $\{\pi_g\}_{g=1}^{G}$
\State Initialize each group with the least costly feasible policy
\State Compute the remaining budget after initialization
\While{there exists a feasible upgrade within the remaining budget}
    \State For each group and each feasible policy upgrade, compute the utility-weighted reduction in surrogate loss per additional symbol
    \State Select the upgrade with the largest positive gain-to-cost ratio
    \If{no positive-gain upgrade exists}
        \State \textbf{break}
    \EndIf
    \State Apply the selected upgrade and update the remaining budget
\EndWhile
\State \Return the final group-wise protection profile $\{\pi_g\}_{g=1}^{G}$
\end{algorithmic}
\end{algorithm}

The optimization in \eqref{eq:tonic_surrogate_knapsack} should be interpreted as a practical transmitter-side design problem rather than an exact reformulation of the full end-to-end objective. Its purpose is to translate the shared utility profile into a group-wise protection profile that is compatible with standard digital communication modules.

Algorithm~\ref{alg:tonic_uep_scheduler} provides a practical realization of \eqref{eq:tonic_surrogate_knapsack}. Starting from the least costly feasible protection profile, it incrementally allocates additional symbols to the upgrades that yield the largest utility-weighted reliability gain per unit cost. The resulting group-wise protection profile is then fixed during deployment.

The utility-weighted surrogate in \eqref{eq:tonic_surrogate_knapsack} is motivated by the following sample-level upper bound, which links task-loss degradation to a utility-weighted token error count before receiver-side gating and completion.

\begin{proposition}[Utility-weighted upper bound on task-loss degradation]
\label{prop:tonic_utility_bound}
Let $\mathbf{Z}=[\mathbf{e}_1,\ldots,\mathbf{e}_L]^{\mathsf T}$ denote the clean embedding sequence induced by the source token sequence $\mathbf{t}$, and let $\hat{\mathbf{Z}}=[\hat{\mathbf{e}}_1,\ldots,\hat{\mathbf{e}}_L]^{\mathsf T}$ denote the embedding sequence induced by the hard-decoded token sequence $\hat{\mathbf{t}}$, where $\hat{\mathbf{e}}_i \triangleq \mathbf{E}[\hat{t}_i,:]$. Define the interpolation path
\begin{equation}
\mathbf{Z}(\alpha)\triangleq \mathbf{Z}+\alpha(\hat{\mathbf{Z}}-\mathbf{Z}),
\qquad \alpha\in[0,1].
\label{eq:tonic_interp_path}
\end{equation}
Assume that the task loss is differentiable with respect to the embedding sequence and that the embedding-table diameter is bounded by
\begin{equation}
\|\mathbf{E}[a,:]-\mathbf{E}[b,:]\|_2 \le \Delta_{\max},
\qquad \forall a,b \in \mathcal{K}.
\label{eq:tonic_embedding_diameter}
\end{equation}
Further define the path-dependent sensitivity
\begin{equation}
w_i^{\mathrm{sup}}
\triangleq
\sup_{\alpha\in[0,1]}
\left\|
\frac{\partial L_{\mathrm{task}}(f_{\mathrm{task}}(\mathbf{Z}(\alpha)),y)}
{\partial \mathbf{e}_i}
\right\|_2 .
\label{eq:tonic_wsup}
\end{equation}
Then the task-loss degradation caused by hard token decoding satisfies
\begin{equation}
\begin{aligned}
&\left|
L_{\mathrm{task}}(f_{\mathrm{task}}(\hat{\mathbf{Z}}),y)
-
L_{\mathrm{task}}(f_{\mathrm{task}}(\mathbf{Z}),y)
\right|  \\
&~~~~~~~~~~~~~\le
\Delta_{\max}
\sum_{i=1}^{L}
w_i^{\mathrm{sup}}\,\mathbbm{1}\{\hat{t}_i\neq t_i\}.
\label{eq:tonic_loss_upper_bound}
\end{aligned}
\end{equation}

\end{proposition}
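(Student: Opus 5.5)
We argue via the fundamental theorem of calculus along the straight-line interpolation path \eqref{eq:tonic_interp_path}. Define the scalar function
\[
\phi(\alpha)\triangleq L_{\mathrm{task}}\bigl(f_{\mathrm{task}}(\mathbf{Z}(\alpha)),y\bigr),\qquad \alpha\in[0,1],
\]
so that the left-hand side of \eqref{eq:tonic_loss_upper_bound} equals $|\phi(1)-\phi(0)|$.

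\textbf{Step 1: the derivative of $\phi$.} By the assumed differentiability of the task loss with respect to the embedding sequence and the chain rule,
\[
\phi'(\alpha)=\sum_{i=1}^{L}\left\langle \frac{\partial L_{\mathrm{task}}(f_{\mathrm{task}}(\mathbf{Z}(\alpha)),y)}{\partial \mathbf{e}_i},\ \hat{\mathbf{e}}_i-\mathbf{e}_i\right\rangle .
\]
Here we use that the $i$-th row of $\mathbf{Z}(\alpha)$ moves with velocity $\hat{\mathbf{e}}_i-\mathbf{e}_i$. One could instead invoke the mean value theorem, but the integral form avoids needing continuity of $\phi'$ beyond integrability. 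If only differentiability (not $C^1$) is assumed, we can use the mean value theorem directly: $\phi(1)-\phi(0)=\phi'(\alpha^\star)$ for some $\alpha^\star\in(0,1)$. Either route gives
\[
|\phi(1)-\phi(0)|\le \sup_{\alpha\in[0,1]}|\phi'(\alpha)|.
\]

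\textbf{Step 2: bound the derivative term by term.} We apply the triangle inequality and Cauchy--Schwarz to each summand, bounding the gradient norm at the point $\alpha$ by its supremum over the path. This supremum is exactly $w_i^{\mathrm{sup}}$ in \eqref{eq:tonic_wsup}. We then observe that $\hat{\mathbf{e}}_i-\mathbf{e}_i=\mathbf{0}$ whenever $\hat t_i=t_i$. Otherwise, its norm is at most $\Delta_{\max}$ by \eqref{eq:tonic_embedding_diameter}. Hence
\[
\|\hat{\mathbf{e}}_i-\mathbf{e}_i\|_2\le \Delta_{\max}\,\mathbbm{1}\{\hat t_i\neq t_i\}.
\]
Combining these estimates gives, uniformly in $\alpha$,
\[
|\phi'(\alpha)|\le \Delta_{\max}\sum_{i=1}^{L} w_i^{\mathrm{sup}}\,\mathbbm{1}\{\hat t_i\neq t_i\},
\]
and the claim follows.

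\textbf{Main obstacle.} The only delicate point is the regularity needed to pass from the derivative bound to the increment. We resolve it with the mean value theorem applied to the real-valued $\phi$, which needs only differentiability on $(0,1)$ and continuity on $[0,1]$. Both are implied by differentiability of the loss in $\mathbf{Z}$, since $\alpha\mapsto\mathbf{Z}(\alpha)$ is affine.

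A secondary remark is that the supremum in \eqref{eq:tonic_wsup} may be infinite, in which case the bound holds trivially. The essential structural observation is that unchanged positions contribute zero displacement. This is what converts a generic Lipschitz-type bound into the utility-weighted error count that motivates the surrogate \eqref{eq:tonic_surrogate_knapsack}.
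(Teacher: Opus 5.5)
Your proof is correct and follows the paper's argument essentially step for step. Both use the same scalar function $\phi(\alpha)$ along the interpolation path and the same chain-rule expression for $\phi'$. Both then apply the triangle inequality and Cauchy--Schwarz with each gradient norm bounded by $w_i^{\mathrm{sup}}$, and observe that unchanged positions give zero displacement while changed ones give at most $\Delta_{\max}$.

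The one difference is that you go from the derivative bound to the increment via the mean value theorem, whereas the paper uses the fundamental-theorem-of-calculus integral. This is a minor regularity refinement, since it avoids needing integrability of $\phi'$, not a different route.
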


\begin{proof}
Define
\[
\phi(\alpha)\triangleq L_{\mathrm{task}}(f_{\mathrm{task}}(\mathbf{Z}(\alpha)),y).
\]
By the fundamental theorem of calculus,
\[
\phi(1)-\phi(0)=\int_{0}^{1}\phi'(\alpha)\,d\alpha .
\]
Using the chain rule,
\[
\phi'(\alpha)
=
\sum_{i=1}^{L}
\left\langle
\frac{\partial L_{\mathrm{task}}(f_{\mathrm{task}}(\mathbf{Z}(\alpha)),y)}{\partial \mathbf{e}_i},
\hat{\mathbf{e}}_i-\mathbf{e}_i
\right\rangle .
\]
Taking absolute values, applying the triangle inequality and Cauchy--Schwarz, and using the definition of $w_i^{\mathrm{sup}}$ yields
\[
|\phi(1)-\phi(0)|
\le
\sum_{i=1}^{L}
w_i^{\mathrm{sup}}\,\|\hat{\mathbf{e}}_i-\mathbf{e}_i\|_2 .
\]
If $\hat{t}_i=t_i$, then $\hat{\mathbf{e}}_i=\mathbf{e}_i$. Otherwise, \eqref{eq:tonic_embedding_diameter} implies
\[
\|\hat{\mathbf{e}}_i-\mathbf{e}_i\|_2 \le \Delta_{\max}.
\]
Substituting this bound proves \eqref{eq:tonic_loss_upper_bound}.
\end{proof}

Proposition~\ref{prop:tonic_utility_bound} justifies utility-weighted error counting as a transmitter-side design criterion, that is, token errors on task-sensitive positions contribute more strongly to the upper bound on task-loss degradation. The proposition is intentionally stated at the pre-gating, pre-completion stage, since its role is to motivate the transmitter-side protection surrogate rather than to characterize the full end-to-end TONIC pipeline. In practice, the path-dependent quantity $w_i^{\mathrm{sup}}$ is not directly tractable, so TONIC uses the gradient-based utility in \eqref{eq:tonic_grad_utility} as a practical first-order proxy and then aggregates it into the shared deployment-time profile defined in \eqref{eq:tonic_grad_utility_avg}.

\subsection{Confidence-Aware Gating and Completion-Assisted Recovery}
\label{subsec:tonic_gating}

The receiver-side goal in TONIC is not merely to output a hard token estimate, but to decide whether the decoded token should be trusted or erased. This distinction is crucial in token-centric inference because an incorrect token can be more harmful than an explicit erasure when a strong completion prior is available. 
In the following development, $w_i$ denotes the effective token utility associated with position $i$, as induced by the deployment-time shared utility profile introduced earlier.

For token position $i$, let the receiver choose an action $a_i \in \mathcal{K}_{\perp}$, where $\mathcal{K}_{\perp}=\mathcal{K}\cup\{\perp\}$. We assign zero cost to a correct accepted token, cost $w_i$ to an incorrect accepted token, and cost $\lambda_i$ to an erasure, where $\lambda_i$ represents the effective penalty of deferring the decision to completion-assisted recovery. Under the posterior distribution $p_i(k)$ defined in \eqref{eq:posterior_v4}, the conditional Bayes risk of outputting token $u\in\mathcal{K}$ is
\begin{equation}
R_i(u)=w_i\bigl(1-p_i(u)\bigr),
\label{eq:tonic_token_risk}
\end{equation}
while the Bayes risk of erasing the position is
\begin{equation}
R_i(\perp)=\lambda_i .
\label{eq:tonic_erasure_risk}
\end{equation}

\begin{theorem}[Utility-aware Bayes-optimal confidence threshold]
\label{thm:tonic_gating}
Assume $w_i>0$ and $0\le\lambda_i\le w_i$. Let $\hat{t}_i=\arg\max_{k\in\mathcal{K}}p_i(k)$ and $c_i=\max_{k\in\mathcal{K}}p_i(k)$. Then the Bayes-optimal action is either the MAP token or an erasure:
\begin{equation}
a_i^\star=
\begin{cases}
\hat{t}_i, & \text{if } w_i(1-c_i)\le\lambda_i,\\
\perp, & \text{otherwise}.
\end{cases}
\label{eq:tonic_opt_action}
\end{equation}
Equivalently, the rule can be written as confidence thresholding:
\begin{equation}
c_i\ge\tau_i \Rightarrow \hat{t}_i, \qquad
c_i<\tau_i \Rightarrow \perp,
\qquad
\tau_i \triangleq 1-\frac{\lambda_i}{w_i}\in[0,1].
\label{eq:tonic_opt_tau}
\end{equation}
\end{theorem}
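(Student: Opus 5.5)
The argument is a direct minimization of the conditional Bayes risk over the finite action set $\mathcal{K}_{\perp}$, carried out in two stages.

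\emph{Stage 1: the best non-erasure action is the MAP token.} Restrict attention to actions $u\in\mathcal{K}$. By \eqref{eq:tonic_token_risk}, $R_i(u)=w_i(1-p_i(u))$. Since $w_i>0$, this is a strictly decreasing affine function of $p_i(u)$. Hence it is minimized by any maximizer of $p_i(\cdot)$, in particular by $\hat{t}_i$, and the minimal value is $R_i(\hat t_i)=w_i(1-c_i)$. Ties in the argmax do not matter, because every maximizer attains the same risk. This reduces the decision to a binary comparison between $\hat t_i$ and $\perp$.

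\emph{Stage 2: compare with erasure.} Compare $w_i(1-c_i)$ with $R_i(\perp)=\lambda_i$ from \eqref{eq:tonic_erasure_risk}. Accepting is optimal when $w_i(1-c_i)\le\lambda_i$, with ties broken in favor of acceptance as in \eqref{eq:tonic_opt_action}; otherwise erasure is strictly better. This gives \eqref{eq:tonic_opt_action}.

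For the threshold form, divide by $w_i>0$ to get the equivalent condition $1-c_i\le\lambda_i/w_i$, that is, $c_i\ge 1-\lambda_i/w_i=\tau_i$. The strict inequality $c_i<\tau_i$ likewise corresponds to erasure. The assumption $0\le\lambda_i\le w_i$ gives $\lambda_i/w_i\in[0,1]$, so $\tau_i\in[0,1]$, which establishes \eqref{eq:tonic_opt_tau}.

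No step here is a real obstacle. The only points needing care are stating the tie-breaking convention so that the ``$\le$'' in \eqref{eq:tonic_opt_action} matches the ``$\ge$'' in \eqref{eq:tonic_opt_tau}, and using $w_i>0$ to justify both the monotonicity in Stage 1 and the division in the threshold rewrite.

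It is also worth remarking that $c_i\ge 1/K$ always holds. So when $\tau_i\le 1/K$, the rule never erases, which is consistent with, though not required by, the statement.
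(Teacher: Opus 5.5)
Your proof is correct and follows the same route as the paper's: minimize $w_i(1-p_i(u))$ over $u\in\mathcal{K}$ to reach the MAP token with risk $w_i(1-c_i)$, compare this with $\lambda_i$, and divide by $w_i>0$ to obtain the threshold $\tau_i=1-\lambda_i/w_i\in[0,1]$. Your remarks on tie-breaking and on $c_i\ge 1/K$ are accurate additions that the paper leaves implicit.
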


\begin{proof}
For any token output $u\in\mathcal{K}$, the risk in \eqref{eq:tonic_token_risk} is minimized by choosing the MAP token $\hat{t}_i$, which yields the minimum token-output risk $w_i(1-c_i)$. The erasure action has risk $\lambda_i$. Therefore, the Bayes-optimal decision is to accept the MAP token if and only if $w_i(1-c_i)\le\lambda_i$, and to erase otherwise, which proves \eqref{eq:tonic_opt_action}. Since $w_i>0$, this inequality is equivalent to thresholding $c_i$ at $\tau_i=1-\lambda_i/w_i$. The assumption $0\le\lambda_i\le w_i$ guarantees that $\tau_i\in[0,1]$.
\end{proof}

Theorem~\ref{thm:tonic_gating} establishes the form of the receiver-side gating rule: a decoded token should be accepted only when its utility-weighted substitution risk is no larger than the effective erasure penalty after completion. The theorem is not intended to provide a closed-form calibration recipe for deployment. Instead, it explains why confidence-aware gating should take a threshold form and why the threshold should depend on both token utility and the relative value of erasure versus direct acceptance.


Using a distinct threshold for every token position would be unnecessary and would increase control complexity. TONIC therefore deploys group-wise thresholds, so that all positions in the same utility group share one confidence threshold. The receiver rule becomes
\begin{equation}
\tilde{t}_i =
\begin{cases}
\hat{t}_i, & c_i \ge \tau_{g(i)},\\
\perp, & c_i < \tau_{g(i)}.
\end{cases}
\label{eq:tonic_gating_rule}
\end{equation}
This grouped parameterization is more robust, reduces control overhead, and naturally aligns the receiver-side policy with the transmitter-side utility grouping.
Algorithm~\ref{alg:online_receiver} summarizes the complete online receiver-side procedure of TONIC, including confidence gating, completion, and downstream task inference.

\begin{algorithm}[t]
\caption{Online Receiver Procedure: Confidence Gating, Completion, and Task Inference}
\label{alg:online_receiver}
\begin{algorithmic}[1]
\Require Received signal $\mathbf{r}$, channel coefficient $h$, grouping map $g(i)$, group-wise thresholds $\{\tau_g\}_{g=1}^{G}$, completion model $f_{\mathrm{comp}}$, task head $f_{\mathrm{task}}$
\Ensure Final prediction $\hat{y}$, hard token sequence $\hat{\mathbf{t}}$, gated token sequence $\tilde{\mathbf{t}}$, completed token sequence $\bar{\mathbf{t}}$
\State Perform coherent demodulation and soft decoding using $(\mathbf{r}, h)$
\State Obtain token posterior distributions $\{p_i(k)\}_{i=1}^{L}$ over $k\in\mathcal{K}$
\For{$i=1$ to $L$}
    \State Compute hard token estimate $\hat{t}_i \gets \arg\max_{k\in\mathcal{K}} p_i(k)$
    \State Compute confidence score $c_i \gets \max_{k\in\mathcal{K}} p_i(k)$
    \If{$c_i \ge \tau_{g(i)}$}
        \State $\tilde{t}_i \gets \hat{t}_i$
    \Else
        \State $\tilde{t}_i \gets \perp$
    \EndIf
\EndFor
\State Form the gated token sequence $\tilde{\mathbf{t}}=[\tilde{t}_1,\ldots,\tilde{t}_L]^{\mathsf T}$
\State Restore erased positions by completion: $\bar{\mathbf{t}} \gets f_{\mathrm{comp}}(\tilde{\mathbf{t}})$
\State Convert $\bar{\mathbf{t}}$ to the completed embedding sequence $\bar{\mathbf{Z}}$
\State Perform downstream inference: $\hat{y} \gets f_{\mathrm{task}}(\bar{\mathbf{Z}})$
\State \Return $\hat{y},\hat{\mathbf{t}},\tilde{\mathbf{t}},\bar{\mathbf{t}}$
\end{algorithmic}
\end{algorithm}

\subsection{Offline Calibration of Group-Wise Thresholds}
\label{subsec:tonic_calibration}

Theorem~\ref{thm:tonic_gating} characterizes the form of the receiver-side gating rule, but the effective erasure penalty $\lambda_i$ is not directly available in closed form. This is because the value of erasing a token position depends jointly on the completion prior, the downstream task head, and the operating point of the overall communication pipeline. Accordingly, TONIC does not attempt to compute token-wise thresholds from \eqref{eq:tonic_opt_tau} directly. Instead, it calibrates group-wise confidence thresholds offline by minimizing the validation task loss.

In practice, this calibration is carried out for the target deployment condition associated with the chosen protection profile and channel operating point.
Given a fixed tokenizer, grouping rule, protection profile, completion model, and task head, the thresholds $\{\tau_g\}_{g=1}^{G}$ are selected by solving
\begin{equation}
\min_{\{\tau_g \in [0,1]\}}
\frac{1}{M}
\sum_{m=1}^{M}
L_{\mathrm{task}}
\bigl(
\hat{y}^{(m)}(\{\tau_g\}), y^{(m)}
\bigr),
\label{eq:tonic_tau_calib_obj}
\end{equation}
where $\hat{y}^{(m)}(\{\tau_g\})$ denotes the final task prediction for validation sample $m$ after confidence gating, completion, and task inference. In other words, the calibration step directly optimizes the deployment-time receiver policy against the final downstream objective, while holding all other system components fixed.
In practice, TONIC solves \eqref{eq:tonic_tau_calib_obj} approximately by coordinate search over a finite threshold grid. This avoids introducing a fragile inner optimization loop at runtime while providing a stable and reproducible receiver policy for deployment. The resulting procedure is summarized in Algorithm~\ref{alg:tonic_tau_calibration}.

\begin{algorithm}[t]
\caption{Offline Calibration of Group-Wise Thresholds}
\label{alg:tonic_tau_calibration}
\begin{algorithmic}[1]
\Require Validation set $\{(\mathbf{x}^{(m)},y^{(m)})\}_{m=1}^{M}$, grouping rule $g(i)$, fixed protection profile, completion model $f_{\mathrm{comp}}$, task head $f_{\mathrm{task}}$, threshold candidate grid $\mathcal{T}_{\mathrm{grid}}$, number of coordinate-search passes $J$
\Ensure Calibrated thresholds $\{\tau_g\}_{g=1}^{G}$
\State Initialize $\tau_g \leftarrow 0.5$ for all $g$
\For{$j=1$ to $J$}
    \For{$g=1$ to $G$}
        \State Temporarily fix all thresholds except $\tau_g$
        \State Search over $\tau\in\mathcal{T}_{\mathrm{grid}}$ and evaluate the average validation task loss
        \State Update $\tau_g$ with the value yielding the smallest validation loss
    \EndFor
\EndFor
\State \Return $\{\tau_g\}_{g=1}^{G}$
\end{algorithmic}
\end{algorithm}

This calibration procedure is the deployment-oriented counterpart of the Bayes-risk interpretation in Theorem~\ref{thm:tonic_gating}. Increasing $\tau_g$ declares more low-confidence positions erased and can reduce harmful substitutions, while decreasing $\tau_g$ passes more hard token decisions directly to the downstream model. The calibrated thresholds are then used by the online receiver procedure in Algorithm~\ref{alg:online_receiver}.

\subsection{Coupling, Deployment Complexity, and Design Implications}
\label{subsec:tonic_coupling_complexity}

The key design insight of TONIC is that unequal protection and confidence-aware gating are complementary rather than competing mechanisms. Unequal protection acts before decoding by reducing the probability of harmful corruption on task-relevant token positions. Confidence-aware gating acts after decoding by preventing highly unreliable substitutions from being passed directly to the downstream model. Completion-assisted recovery then handles part of the remaining uncertainty by restoring erased positions using contextual token priors. Accordingly, the gain of TONIC comes from the coordinated interaction of these mechanisms rather than from any single module in isolation.

This interaction suggests a practical design principle. Token groups with higher downstream importance should generally receive both stronger transmission protection and more conservative receiver-side acceptance, so that residual uncertainty on important positions is preferentially converted into recoverable erasures rather than accepted as low-confidence substitutions. By contrast, less critical groups can tolerate weaker protection and more permissive acceptance without causing the same level of task degradation. This principle explains why the transmitter-side utility profile and the receiver-side confidence thresholds should be designed jointly.

From a deployment perspective, the main additional complexity of TONIC lies in offline preparation rather than online runtime. Utility estimation, grouping, protection profiling, and threshold calibration are all performed offline. During runtime, the user equipment only applies a precomputed grouping rule and a fixed group-wise protection profile, while the receiver performs lightweight confidence thresholding before forwarding the masked token sequence to the server-side completion model. This separation preserves practical deployability while retaining the semantic advantages of token-aware protection and completion-assisted recovery.

\begin{figure*}[t!]
    \centering
    \subfigure[AWGN.]{
        \includegraphics[width=0.31\textwidth]{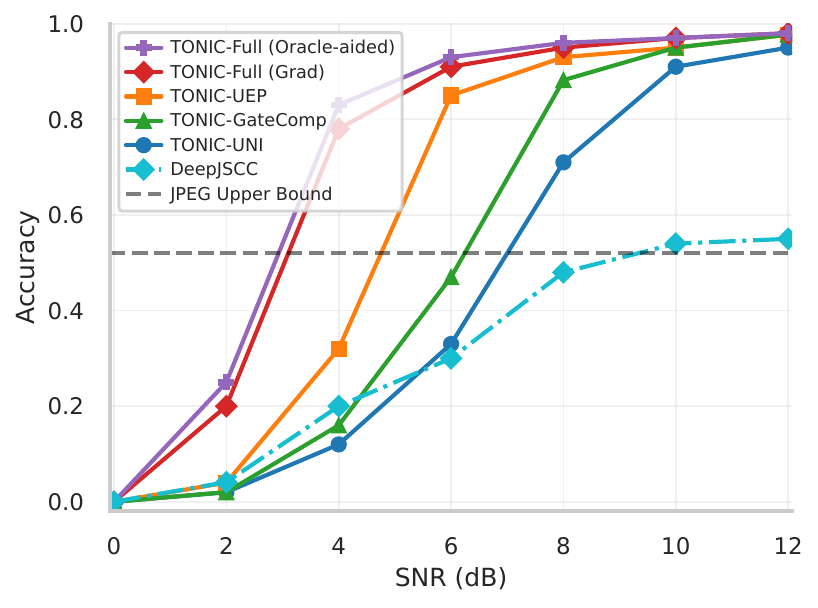}
        \label{fig:main_acc_awgn}
    }
    \hfill
    \subfigure[Rayleigh fading.]{
        \includegraphics[width=0.31\textwidth]{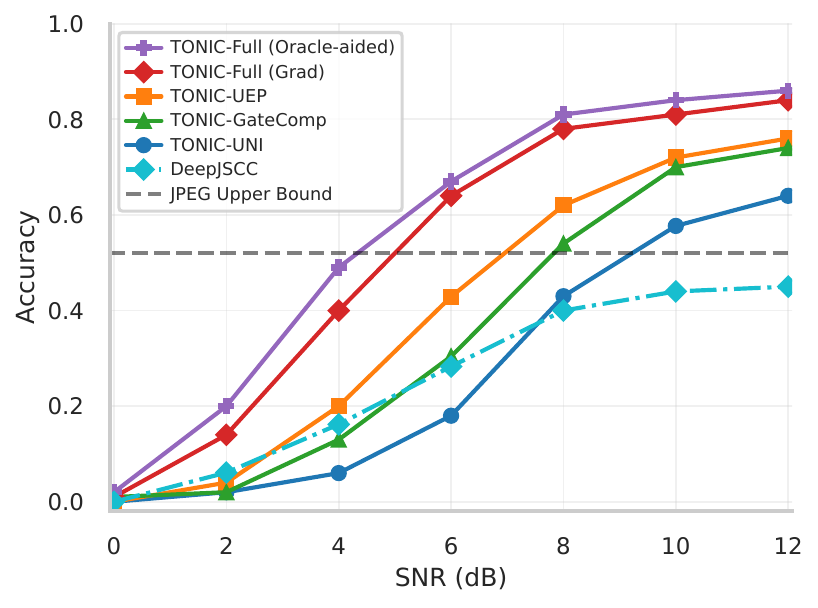}
        \label{fig:main_acc_rayleigh}
    }
    \hfill
    \subfigure[Rician fading.]{
        \includegraphics[width=0.31\textwidth]{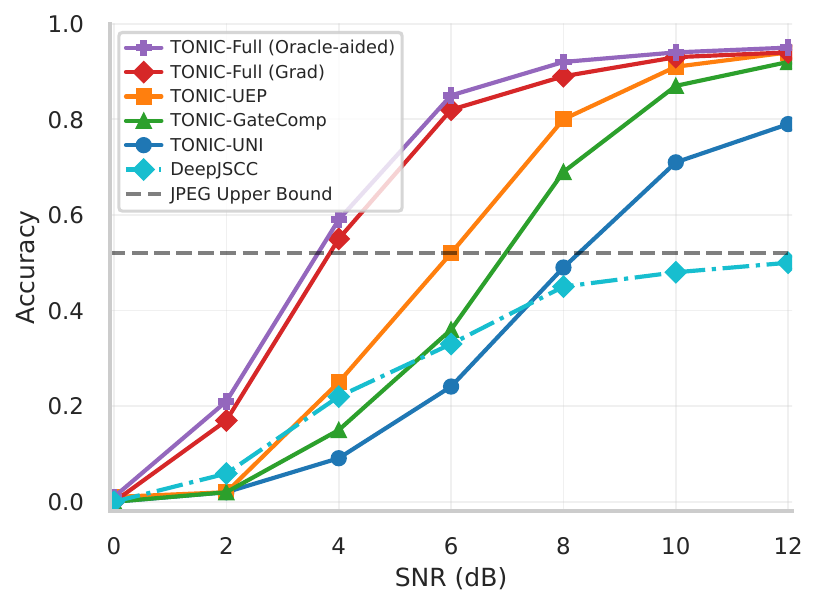}
        \label{fig:main_acc_rician}
    }
    \caption{Accuracy versus SNR under AWGN, Rayleigh fading, and Rician fading.}
    \label{fig:main_acc_three_channels}
\end{figure*}

\begin{table}[t]
\centering
\caption{Main implementation and evaluation settings.}
\label{tab:exp_config}
\footnotesize
\renewcommand{\arraystretch}{1.05}
\setlength{\tabcolsep}{4pt}
\begin{tabular}{p{0.30\columnwidth} p{0.62\columnwidth}}
\toprule
\textbf{Item} & \textbf{Setting} \\
\midrule
Task & Image classification \\
Datasets & CIFAR-10 (sanity check), ImageNet-100 (main) \\
Tokenizer & Pretrained VQ-style visual tokenizer \\
Codebook size & $K=16384$ \\
Token grid / length & $24\times24$, $L=576$ \\
Bits per token & $m=\lceil \log_2 K \rceil = 14$ \\
Task head & Lightweight token-based classifier; offline-trained and frozen \\
Completion model & Transformer-based masked-token predictor; offline-trained and frozen \\
Utility model & Gradient-based utility (deployable), masking-based utility (oracle-aided reference) \\
PHY & 16QAM + group-wise LDPC rate selection \\
Nominal budget & $B_0=4096$ symbols \\
Budget sweep & $\{0.5B_0,\, B_0,\, 2B_0\}$ \\
Channels & AWGN, Rayleigh block fading, Rician block fading \\
Metrics & Accuracy, TER, WAR \\
Receiver assumption & Coherent demodulation/decoding with CSI \\
\bottomrule
\end{tabular}
\end{table}

\section{Performance Evaluation}
\label{sec:perf}
This section evaluates TONIC under a fixed symbol-budget constraint and a practical digital PHY. We report both task-level performance and token-level diagnostics, and benchmark TONIC against representative pixel-domain baselines, separation-style references, and token-domain variants under matched communication resources. The evaluation addresses four questions: whether TONIC improves task accuracy across different wireless channels, whether the gain persists across different communication budgets, how utility-aware protection and confidence-aware gating affect token-level behavior, and whether the qualitative evidence is consistent with the intended design logic of the framework.

\subsection{Experimental Setup}
\label{subsec:setup}

\subsubsection{Datasets and task}
\label{subsubsec:data_task}

We evaluate TONIC on image classification, which serves as the downstream task throughout this section. While the proposed token-centric communication framework is not conceptually restricted to classification, the present experiments focus exclusively on this setting. CIFAR-10 is used for sanity checks and debugging, while ImageNet-100 serves as the main evaluation dataset. The principal task metric is classification accuracy, and we also record the average task loss.

\subsubsection{Tokenizer and token interface}
\label{subsubsec:tokenizer}
Each image is converted into a discrete token sequence by a pretrained VQ-style tokenizer. Let $\mathbf{t} = [t_1,\ldots,t_L]^{\mathsf T}$ denote the source token sequence, where $t_i \in \mathcal{K} = \{1,\ldots,K\}$. For digital transmission, each token index is mapped to a fixed-length bit representation with $m=\lceil \log_2 K \rceil$ bits. Unless otherwise stated, the downstream task head operates on the completed token embeddings rather than on reconstructed pixels.

\subsubsection{Completion model and offline training}
\label{subsubsec:completion_setup}
The receiver employs a Transformer-based masked-token completion model $f_{\mathrm{comp}}(\cdot)$. This model is trained offline on tokenized images using random masking and cross-entropy loss, and is frozen during all communication experiments. The downstream task head is a lightweight token-based classifier operating on the completed token embeddings. Both the completion model and the task head are trained offline and remain fixed throughout evaluation; no per-SNR or per-channel end-to-end retraining is performed.

\subsubsection{Utility profile, grouping, and threshold calibration}
\label{subsubsec:utility_grouping}

To avoid per-sample signaling overhead, TONIC uses a shared position-wise utility profile estimated offline from the calibration data. Token positions are quantized into $G$ utility groups, and the group-wise confidence thresholds $\{\tau_g\}$ are calibrated offline by minimizing validation task loss, as described in Algorithm~\ref{alg:tonic_tau_calibration}. This yields a compact control interface shared by the transmitter and receiver.

\subsubsection{PHY instantiation}
\label{subsubsec:phy_setup}

The physical layer is instantiated using fixed 16QAM modulation and group-wise unequal protection realized through LDPC code-rate selection from a finite candidate set $\mathcal{P}$. Each sample is transmitted under a fixed symbol budget $N$ and an average transmit-power constraint. Soft decoding produces bit-level log-likelihood ratios, which are then mapped to token posteriors and confidence scores used by the receiver-side gating rule. The main implementation and evaluation settings are summarized in Table~\ref{tab:exp_config}.

\subsubsection{Channel models}
\label{subsubsec:channels}

We evaluate TONIC under three instantiations of the flat block-fading channel model introduced in Section~III: AWGN, Rayleigh block fading, and Rician block fading with a fixed $K$-factor. In all cases, the channels are normalized such that $\mathbb{E}[|h|^2]=1$, so that the nominal SNR remains comparable across channel types. The receiver is assumed to have CSI for coherent demodulation and decoding. These channel instantiations are used to test whether the proposed token-centric design remains effective beyond a single propagation condition.

\subsection{Baselines and TONIC Variants}
\label{subsec:baselines}

We compare TONIC against both external references and internal ablations.

\subsubsection{External baselines}

\begin{itemize}
    \item JPEG Upper Reference: This is a budget-constrained ideal-link reference obtained by selecting the best JPEG operating point under the same nominal communication budget without channel corruption. It is reported as a separation-style upper reference under the considered budget rather than as a noisy-channel baseline.
    \item DeepJSCC: We include a pixel-domain deep joint source channel coding baseline evaluated under the same nominal communication budget. This baseline provides a representative learned image-transmission benchmark outside the token domain.
\end{itemize}

\subsubsection{TONIC variants}

The following TONIC variants are reported to isolate the contribution of each module.
\begin{itemize}
    \item TONIC-UNI: uniform protection only, with no confidence gating and no completion.
    \item TONIC-UEP: utility-aware unequal protection only, without receiver-side gating or completion.
    \item TONIC-GateComp: uniform protection combined with confidence-aware gating and completion.
    \item TONIC-Full (Grad): the full practical design, combining utility-aware protection, confidence-aware gating, and completion using the gradient-based utility profile.
    \item TONIC-Full (Oracle-aided): the same full pipeline, but with an oracle-aided utility profile used as an offline reference.
\end{itemize}

These variants are designed to disentangle the gain of transmitter-side unequal protection from that of receiver-side gating and completion-assisted recovery.

\subsection{Metrics}
\label{subsec:metrics}

We report a compact set of metrics that matches the token-centric perspective of TONIC while keeping the evaluation focused.

\subsubsection{Task metrics}
\label{subsubsec:task_metrics}

The main task metric is classification accuracy. We also record the average task loss.

\subsubsection{Token error rate (TER)}
\label{subsubsec:ter_metric}

Let $\mathbf{t} = [t_1,\ldots,t_L]^{\mathsf T}$ denote the source token sequence and let $\bar{\mathbf{t}} = [\bar{t}_1,\ldots,\bar{t}_L]^{\mathsf T}$ denote the final completed token sequence after receiver-side gating and completion. We define
\begin{equation}
\mathrm{TER} \triangleq \frac{1}{L}\sum_{i=1}^{L}\mathbbm{1}\{\bar{t}_i \neq t_i\}.
\label{eq:ter_def_v4}
\end{equation}
TER therefore measures the final end-to-end token mismatch after the full TONIC recovery pipeline.

\subsubsection{Wrong-but-accepted ratio (WAR)}
\label{subsubsec:war_metric}

Let $\hat{\mathbf{t}} = [\hat{t}_1,\ldots,\hat{t}_L]^{\mathsf T}$ denote the hard-decoded token sequence before gating, and let $\tilde{\mathbf{t}} = [\tilde{t}_1,\ldots,\tilde{t}_L]^{\mathsf T}$ denote the gated sequence with $\tilde{t}_i \in \mathcal{K}_{\perp}$. We define
\begin{equation}
\mathrm{WAR} \triangleq
\frac{
\sum_{i=1}^{L}\mathbbm{1}\{\tilde{t}_i \neq \perp,\ \hat{t}_i \neq t_i\}
}{
\sum_{i=1}^{L}\mathbbm{1}\{\tilde{t}_i \neq \perp\}
}.
\label{eq:war_def_v4}
\end{equation}
If no token is accepted at a given operating point, WAR is defined to be zero by convention. WAR measures the fraction of erroneous hard-decoded tokens among those accepted by the receiver before completion. It is therefore a receiver-side diagnostic metric rather than a direct surrogate for final task performance.

Unless otherwise stated, the token-domain diagnostics TER and WAR are reported only for the TONIC family. For pixel-domain baselines such as DeepJSCC and for the JPEG upper reference, the main comparison metric is task accuracy, since these methods do not naturally admit the same token-level decomposition used by TONIC.

\subsection{Results and Discussion}
\label{subsec:results}

\subsubsection{Accuracy versus SNR across channels}
\label{subsubsec:acc_snr_channels}

Fig.~\ref{fig:main_acc_three_channels} reports the main performance comparison under AWGN, Rayleigh, and Rician channels at a fixed communication budget. Three observations are particularly important.

First, the TONIC family consistently outperforms the reduced token-domain variants across all three channels. In particular, TONIC-Full (Grad) delivers the strongest practical performance over most of the evaluated SNR range, which confirms that the combination of utility-aware protection and receiver-side confidence-aware gating is more effective than either mechanism alone.
Second, in the evaluated setting, AWGN yields the highest accuracy, Rayleigh fading is the most challenging, and Rician fading lies in between. This behavior is consistent with the different levels of channel uncertainty in the three channel instantiations and shows that the gain of TONIC is not tied to a single propagation condition.
Third, the gap between TONIC-Full (Grad) and TONIC-Full (Oracle-aided) is consistently small. This is a useful result rather than a limitation, because it indicates that the practical gradient-based utility profile already captures most of the token-importance structure needed for transmitter-side protection. In contrast, the gain of TONIC-UEP over TONIC-UNI confirms that unequal protection alone is already beneficial, even before introducing receiver-side gating and completion.
\begin{figure}[t!]
    \centering
    \includegraphics[width=0.8\columnwidth]{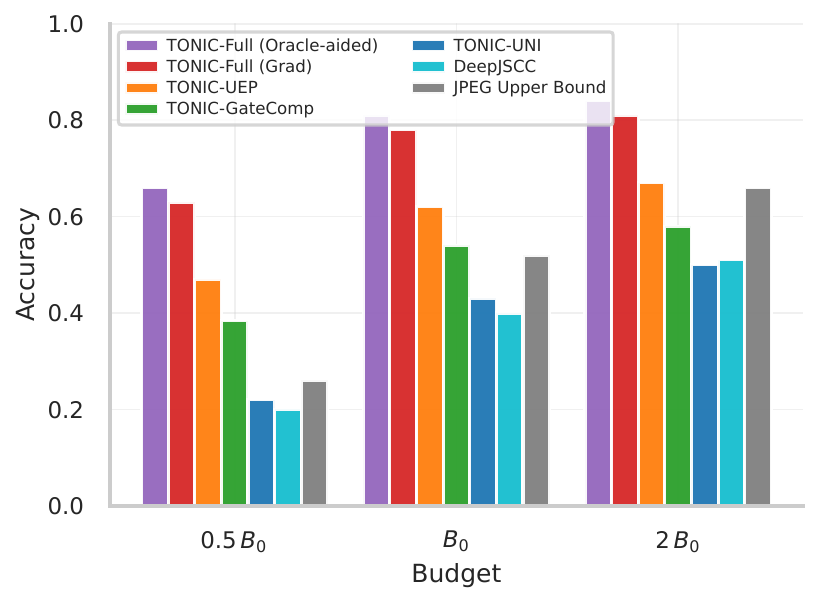}
    \caption{Accuracy versus communication budget under Rayleigh fading.}
    \label{fig:acc_vs_budget_rayleigh}
\end{figure}
\begin{figure}[t!]
    \centering
    \subfigure[TER versus SNR.]{
        \includegraphics[width=0.8\columnwidth]{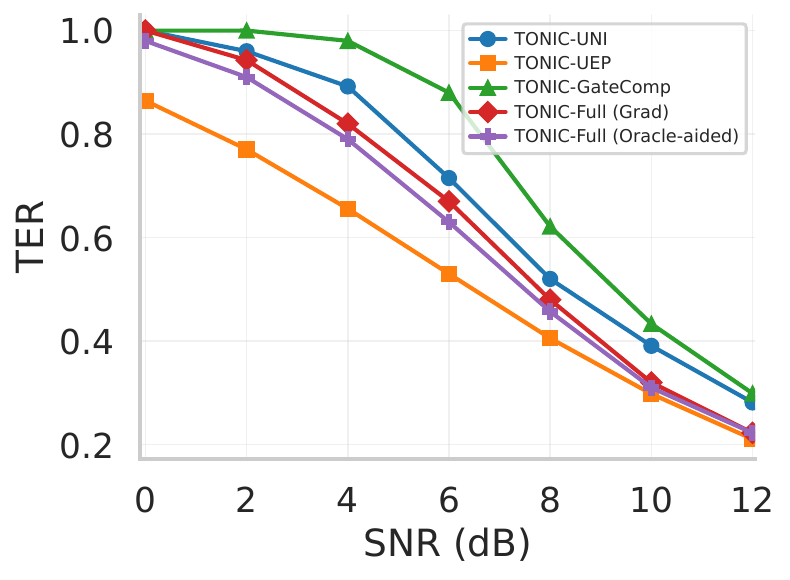}
        \label{fig:ter_vs_snr_rayleigh}
    }
    \subfigure[WAR versus SNR.]{
        \includegraphics[width=0.8\columnwidth]{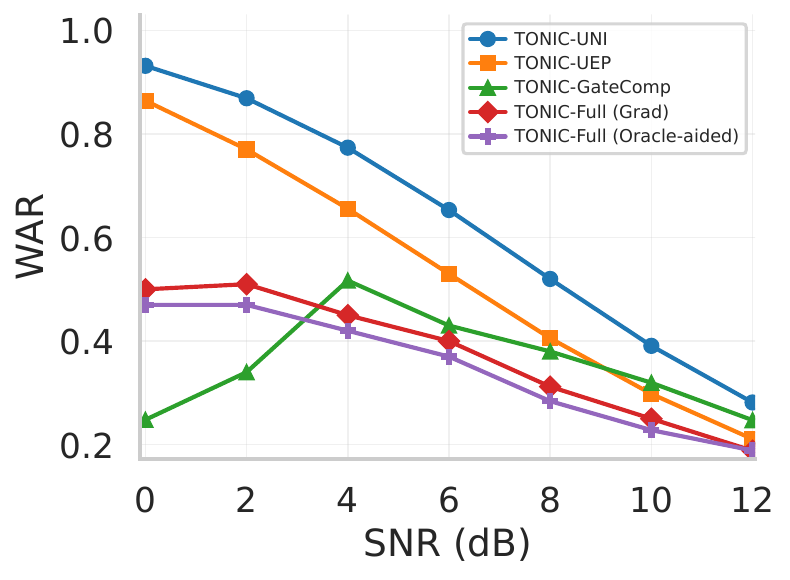}
        \label{fig:war_vs_snr_rayleigh}
    }
    \caption{Token-level reliability of TONIC under Rayleigh fading.}
    \label{fig:ter_war_rayleigh}
\end{figure}

\begin{figure}[t!]
    \centering
    \subfigure[Accuracy versus TER.]{
        \includegraphics[width=0.8\columnwidth]{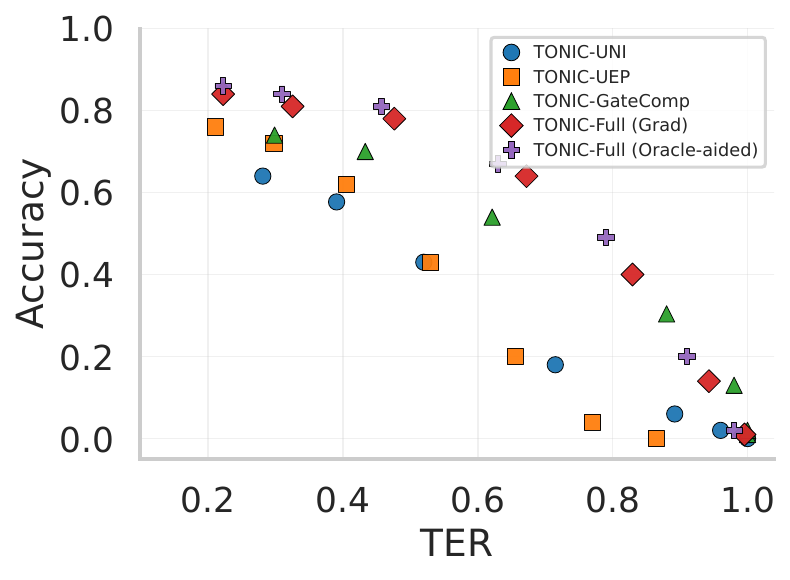}
        \label{fig:acc_vs_ter_rayleigh}
    }
    \subfigure[Accuracy versus WAR.]{
        \includegraphics[width=0.8\columnwidth]{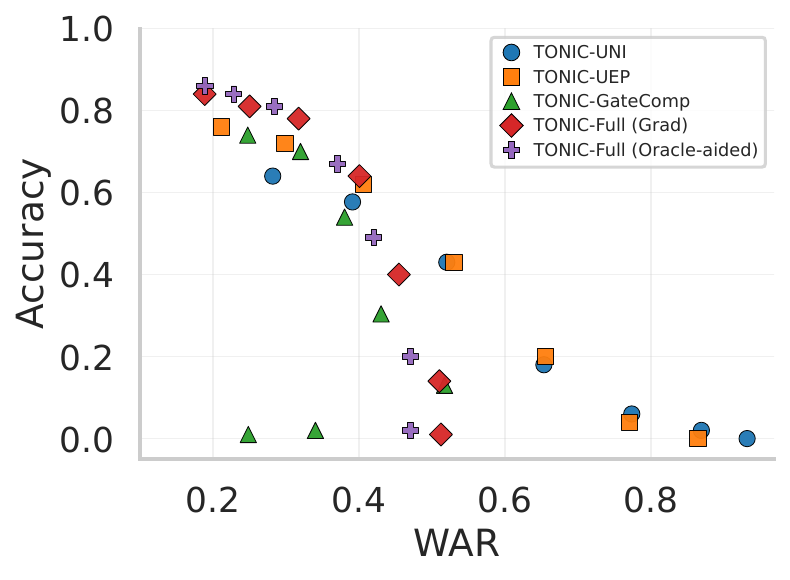}
        \label{fig:acc_vs_war_rayleigh}
    }
    \caption{Task accuracy versus token-level error metrics under Rayleigh fading}
    \label{fig:acc_vs_ter_war_rayleigh}
\end{figure}

\begin{figure*}[t!]
    \centering
    \subfigure[Original image.]{
        \includegraphics[width=0.166\textwidth]{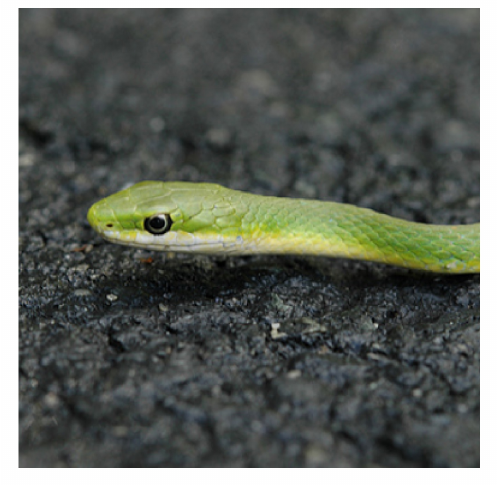}
        \label{fig:case_original}
    }
    \hfill
    \subfigure[Token utility heatmap.]{
        \includegraphics[width=0.2\textwidth]{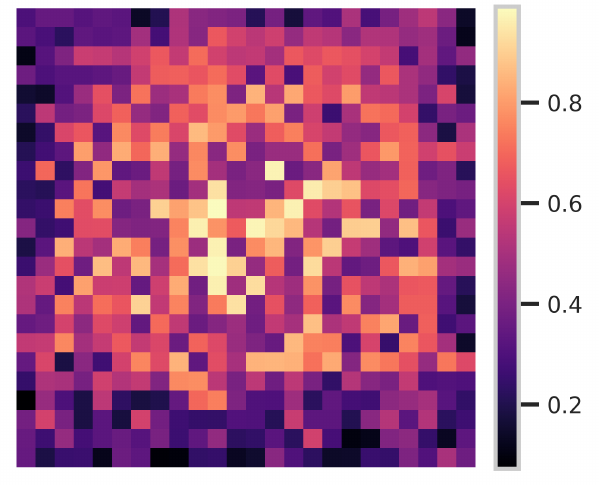}
        \label{fig:case_utility}
    }
    \hfill
    \subfigure[Utility-aware grouping map.]{
        \includegraphics[width=0.2\textwidth]{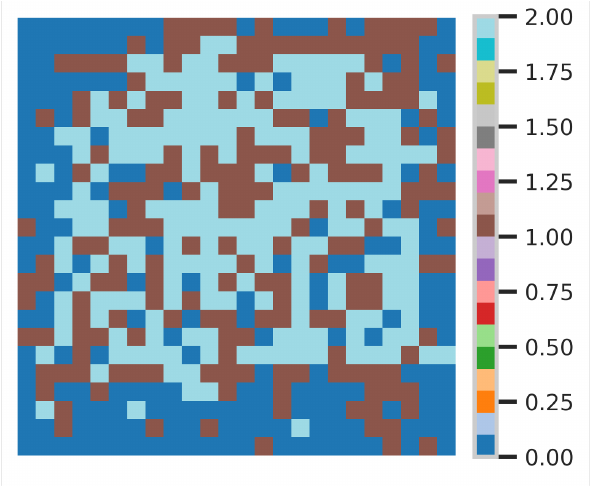}
        \label{fig:case_grouping}
    }
    \hfill
    \subfigure[Per-group protection and behavior.]{
        \includegraphics[width=0.25\textwidth]{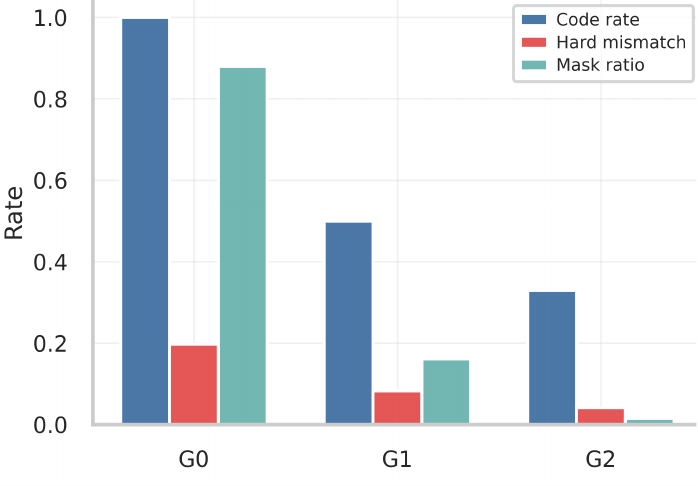}
        \label{fig:case_group_behavior}
    }
    \caption{Illustration of the utility-aware grouping and protection mechanism of TONIC for a representative sample.}
    \label{fig:utility_grouping_visualization}
\end{figure*}

\begin{figure*}[t!]
    \centering
    \includegraphics[width=0.7\textwidth]{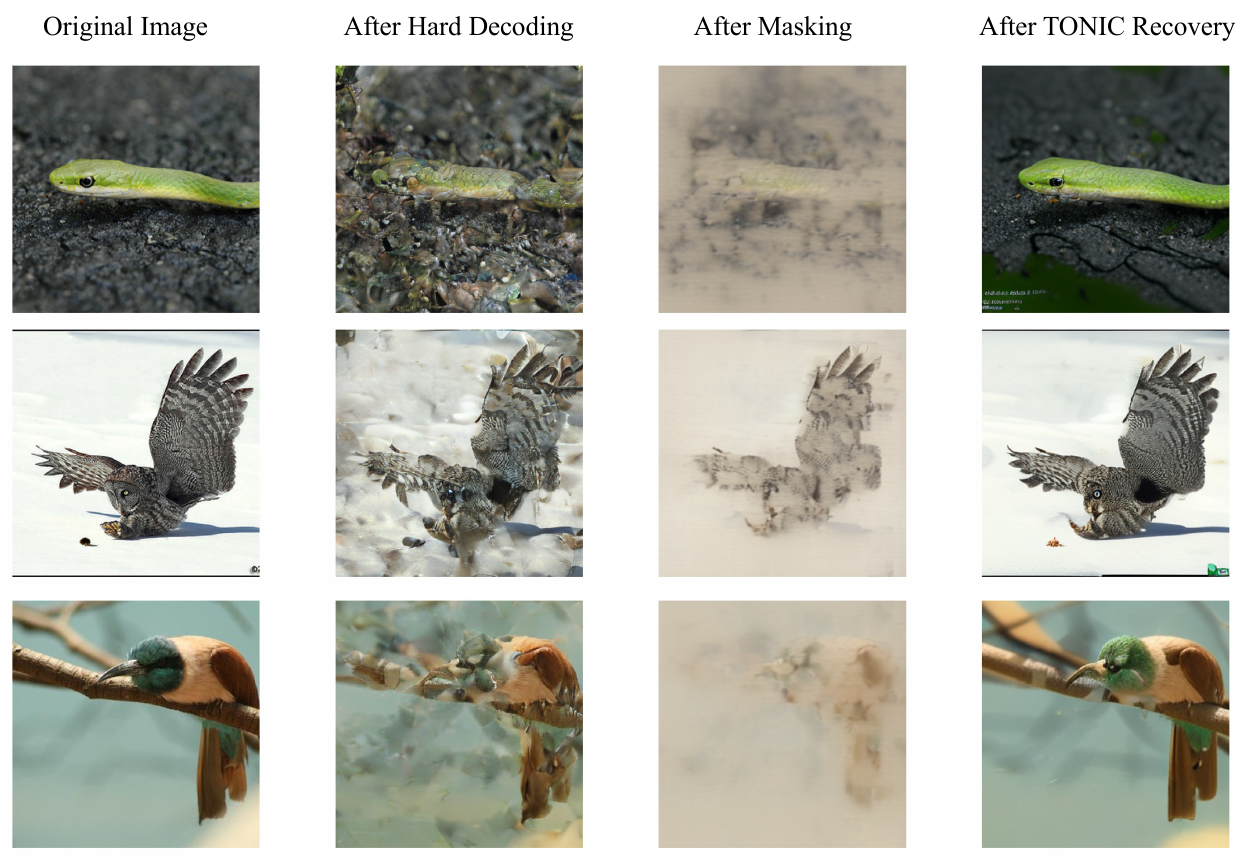}
    \caption{Detokenized images along the TONIC recovery pipeline, shown for qualitative intuition only; visual fidelity is not the optimization target.}
    \label{fig:detok_pipeline}
\end{figure*}

\subsubsection{Accuracy versus communication budget}
\label{subsubsec:budget_sweep}

Fig.~\ref{fig:acc_vs_budget_rayleigh} evaluates how task accuracy scales with the communication budget under Rayleigh fading. The main message is that the gain of TONIC is not restricted to a single carefully chosen operating point. Instead, utility-aware protection and receiver-side completion remain effective across different resource regimes.

The advantage of TONIC is particularly meaningful in the low-budget regime, where uniformly protecting all token positions is inefficient and the value of prioritizing task-critical positions becomes more pronounced. As the budget increases, all methods improve, as expected. However, the relative advantage of TONIC-Full remains visible, suggesting that the gain of TONIC is not tied to a single budget point.

\subsubsection{Token-level reliability under Rayleigh fading}
\label{subsubsec:token_metrics_snr}

To better understand the mechanism behind the task-level gains, Fig.~\ref{fig:ter_war_rayleigh} reports token-level behavior under the Rayleigh channel at budget $4096$. The two metrics play different roles and should be interpreted jointly.

In Fig.~\ref{fig:ter_vs_snr_rayleigh}, TER decreases with SNR for all TONIC variants, confirming that the final completed token sequence becomes more accurate as channel quality improves. The lower TER achieved by stronger TONIC variants shows that transmitter-side utility-aware protection contributes directly to end-to-end token recovery.

The behavior of WAR in Fig.~\ref{fig:war_vs_snr_rayleigh} is qualitatively different, which is precisely why it is informative. At very low SNR, methods with receiver-side gating can exhibit a small WAR because they reject many unreliable hard token decisions instead of accepting them as harmful substitutions; in this regime, a small WAR should therefore not be interpreted in isolation as evidence of superior end-to-end performance. As the SNR increases, the accepted-token set expands and the receiver begins to trust more positions, so WAR may first rise and then decline. Accordingly, WAR is best interpreted as a diagnostic measure of the purity of accepted hard token decisions before completion, rather than as a direct surrogate for final task accuracy. Taken together, TER and WAR show that TONIC improves performance through two coupled mechanisms: transmitter-side utility-aware protection and receiver-side confidence-aware gating.

\subsubsection{Accuracy versus TER/WAR relationship}
\label{subsubsec:acc_ter_scatter}

Fig.~\ref{fig:acc_vs_ter_war_rayleigh} visualizes task accuracy against token-level error metrics for the TONIC family under Rayleigh fading at budget $4096$. In Fig.~\ref{fig:acc_vs_ter_rayleigh}, lower TER generally corresponds to higher task accuracy, but the relationship is not one-to-one. This is expected because classification performance depends not only on how many token positions are incorrect, but also on which positions are incorrect. Small differences concentrated on task-critical positions can therefore lead to visibly different classification outcomes even when the average TER remains similar.

Fig.~\ref{fig:acc_vs_war_rayleigh} provides a complementary receiver-side perspective. A lower WAR does not automatically imply the highest final accuracy, since WAR only reflects the accepted hard-token set before completion. A method may achieve a low WAR by aggressively erasing uncertain positions while still relying heavily on the completion stage for the final recovery. Hence, WAR and accuracy should be interpreted jointly: WAR captures the quality of accepted hard decisions, whereas final accuracy depends on the complete sequence of acceptance, erasure, completion, and downstream inference.

\subsubsection{Utility-aware grouping and qualitative intuition}
\label{subsubsec:qual_example}

Fig.~\ref{fig:utility_grouping_visualization} provides a qualitative view of the transmitter-side design. The utility heatmap shows that token importance is strongly non-uniform across spatial positions, while the grouping map converts this heterogeneity into a finite number of utility groups. The final panel then shows that different groups receive different protection strengths and exhibit different mismatch and masking behavior. Taken together, these visualizations illustrate the transmitter-side intuition behind TONIC: under a fixed communication budget, protection should be concentrated on positions that are more important to the downstream task.

\subsubsection{Detokenized intuition along the recovery pipeline}
\label{subsubsec:detok_pipeline}

Fig.~\ref{fig:detok_pipeline} provides an additional intuitive view using detokenized images along the transmission-and-recovery pipeline. These images are shown purely for qualitative intuition. Since TONIC is task-oriented rather than reconstruction-oriented, visual fidelity is not the optimization target. Accordingly, the detokenized images should not be interpreted as a reconstruction benchmark; they only illustrate how hard token corruption, erasure gating, and completion affect the token sequence before task inference.

\subsection{Discussion}
\label{subsec:discussion}

The experimental results consistently support the central design logic of TONIC. Utility-aware protection reduces the frequency of harmful corruption on task-relevant token positions, while confidence-aware gating converts part of the remaining uncertainty into a form that is more compatible with completion-assisted recovery. Their combination yields the strongest practical operating point among the TONIC variants.

The results also clarify the role of the oracle-aided reference. The small gap between TONIC-Full (Grad) and TONIC-Full (Oracle-aided) suggests that the practical gradient-based utility profile is already sufficiently informative for protection design. This is favorable from a deployment perspective, since it indicates that most of the gain can be captured without relying on oracle supervision.

Finally, the qualitative figures reinforce an important conceptual point: TONIC should not be judged by reconstructed image fidelity. The communication target is the tokenized semantic representation required by the downstream model, and the value of the framework lies in preserving task-relevant token structure under limited communication resources.

\section{Conclusion}
\label{sec:conclusion}

This paper presented TONIC, a token-centric semantic communication framework for task-oriented wireless systems. TONIC departs from bit-centric communication by directly targeting the semantic token interface consumed by the downstream model. The framework combines transmitter-side utility-aware unequal protection with receiver-side confidence-aware gating and generative completion, thereby jointly controlling which token positions receive stronger protection and how residual uncertainty is handled before task inference.
We further established a utility-aware Bayes-risk interpretation for the receiver-side gating rule and developed a practical deployment pipeline based on offline utility profiling, token grouping, and threshold calibration. Experimental results on image classification showed that TONIC consistently improves task accuracy over separation-based transmission, pixel-domain deep JSCC, and token-domain baselines under matched communication budgets across AWGN, Rayleigh, and Rician channels.
The present work instantiated TONIC on wireless image transmission with downstream classification. Future work will extend the framework to richer multimodal and multiuser settings and further tighten the interaction between communication, token completion, and downstream decision making under dynamic wireless conditions.

\bibliography{Ref_cleaned_ieee}
\bibliographystyle{IEEEtran}

\end{document}